\newcommand{\shadowString}{circular drop shadow}
\newcommand{\removelatexerror}{\let\@latex@error\@gobble}
\newtheorem{definition}{Definition}
\newtheorem{problem}{Problem}
\newtheorem{example}{Example}
\newtheorem{remark}{Remark}
\newtheorem{lemma}{Lemma}
\renewcommand{\phi}{\varphi}
\definecolor{Gray}{gray}{0.85}
\definecolor{LightCyan}{rgb}{0.88,1,1}
\newcolumntype{a}{>{\columncolor{Gray}}c}
\newcommand{\traj}{\tau}
\newcommand{\cost}[1]{c(#1)}
\newcommand{\Cost}[1]{C(#1)}
\newcommand{\pcs}{\mathcal{C}}
\newcommand{\pcst}[1]{\pcs(#1)}
\newcommand{\acc}[1]{\mathrm{acc}(#1)}
\newcommand{\trace}{\sigma}
\newcommand{\plan}{\pi}
\newcommand{\csltl}{scLTL }
\newcommand{\exstate}[2]{x_{#1, #2}}
\newcommand{\paut}{\mathcal{P}}
\newcommand{\ptraj}{\rho}
\newcommand{\pacc}{p_{acc}}
\newcommand{\dfa}{\mathcal{A}}
\title{\LARGE \bf
Optimal Cost-Preference Trade-off Planning with Multiple \\Temporal Tasks
}
\author{{Peter Amorese and Morteza Lahijanian}% <-this % stops a space
\thanks{This work was supported in part by University of Colorado Boulder and NASA COLDTech Program under grant \#80NSSC21K1031.}%
\thanks{Authors are with the department of Aerospace Engineering Sciences at the University of Colorado Boulder, CO, USA
        {\tt\small \{\textit{firstname}.\textit{lastname}\}@colorado.edu}}%
}
\begin{document}
\AddToShipoutPictureBG*{%
  \AtPageUpperLeft{%
    \hspace{16.5cm}%
    \raisebox{-1.5cm}{%
      \makebox[0pt][r]{To appear in the Int'l. Conference on Intelligent Robots and Systems (IROS), 2023.}}}}
\maketitle

%TODO: remove this before submission
\thispagestyle{plain}
\pagestyle{plain}
%%%%

\begin{abstract}
Autonomous robots are increasingly utilized in realistic scenarios with multiple complex tasks. In these scenarios, there may be a preferred way of completing all of the given tasks, but it is often in conflict with optimal execution. 
Recent work studies preference-based planning, however, they have yet to extend the notion of preference to the behavior of the robot with respect to each task. 
In this work, we introduce a novel notion of preference that provides a generalized framework to express preferences over individual tasks as well as their relations.
Then, we perform an optimal trade-off (Pareto) analysis between behaviors that adhere to the user's preference and the ones that are resource optimal.
We introduce an efficient planning framework that generates Pareto-optimal plans given user's preference by extending $A^*$ search. Further, we show a method of computing the entire Pareto front (the set of all optimal trade-offs) via an adaptation of a multi-objective $A^*$ algorithm.  We also present a problem-agnostic search heuristic to enable scalability.
We illustrate the power of the framework on both mobile robots and manipulators. Our benchmarks show the effectiveness of the heuristic with up to 2-orders of magnitude speedup.
\end{abstract}

% Autonomous robots are increasingly utilized in realistic scenarios with multiple complex tasks. In these scenarios, there may be a preferred way of completing all of the given tasks, but it is often in conflict with optimal execution. Recent work studies preference-based planning, however, they have yet to extend the notion of preference to the behavior of the robot with respect to each task. In this work, we introduce a novel notion of preference that provides a generalized framework to express preferences over individual tasks as well as their relations. Then, we perform an optimal trade-off (Pareto) analysis between behaviors that adhere to the user's preference and the ones that are resource optimal. We introduce an efficient planning framework that generates Pareto-optimal plans given user's preference by extending A* search. Further, we show a method of computing the entire Pareto front (the set of all optimal trade-offs) via an adaptation of the multi-objective A* algorithm.  We also present a problem-agnostic search heuristic to enable scalability. We illustrate the power of the framework on both mobile robots and manipulators. Our benchmarks show the effectiveness of the heuristic with up to 2-orders of magnitude speedup.
% \input{sections/1_Introduction}
\section{Introduction}
\label{sec: intro}

% \ml{\\
% P1: Give the big picture: emphasize high-level tasks... multiple tasks... cost of executing... preference over tasks... This paper aims to develop a computational framework for trade-off analysis between preference and cost of tasks. \\
% P2: Example (Coldtech?) \\
% P3: LTL planning for single tasks - both for mobile robots and manipulators... expressing over task was not included \\
% P4: recent work focus on preference-based planners... WHAT ARE THEIR SHORTCOMINGS? \\
% P5: this work: how do we address those shortcomings????? \\
% P6: Contributions and novelty
% }

% \pa{\\
% contributions:
% 1: pf computation \& single plan synthesis \\
% 2: general preference over costs \\
% 3: heuristics \\
% 4: case studies and benchmarks showing ...}

% \kw{Autonomous robot have advanced into more complex environments. Examples include \ml{examples}.  They are required to plan for multiple tasks while minimizing for time, energy consumption and other objectives. Additionally, robot may need to adhere to human preferences of how the tasks are completed.}

Recent advancements have enabled robots to take on dominant roles in our daily lives. Examples include self-driving cars, home assistive robots, and cooperative manipulators in factories. 
As they become more integrated into society, there is an increasing demand for them to perform not only multiple simultaneously-assigned complex tasks efficiently (e.g., minimizing time and energy) but also to complete them in a way that adheres to human preferences (e.g., some should be completed within a time window and others in a particular order). These objectives, however, are often competing;  that is, by optimizing for efficiency, the execution becomes less preferred.  This results in a trade-off between the objectives, and hence, the goal becomes to optimize for this trade-off.  Nevertheless, there are often many optimal trade-offs, known as the \emph{Pareto front}. Each point on the front corresponds to a unique behavior; hence, to pick a favorable behavior, it is usually desirable to know the whole front.
% Once the Pareto front has been computed, the user or another high-level reasoning agent can select the desired behavior for execution. 
% \pa{not sure if this is the angle we want to take, but leaving it for now} \kw{To Morteza: As a reader, it is not clear why computing for all pareto points, i.e., pareto front, is important. Why are we not satisfied with just one pareto point} \jk{I agree with Kandai. Need to explain what "many optimal trade-offs means"}. 
This poses several challenges from the planning perspective:  
(i) the notion of \emph{preference} is not trivial, (ii) generating the whole Pareto front is computationally intensive, and (iii) the planning space is immensely large due to the multiplicity and complexity of tasks. 
%\jk{Some of these are not worded as "challenges".}
This paper aims to address these challenges by developing an efficient computational framework for planning for optimal trade-offs between \textit{cost} and \textit{preference} for multiple complex tasks.
% \jk{This is something you say before stating the contributions of the paper. Not before an example.}

% Imagine participating in an activity \kw{activity might be too vague/general.} that requires accomplishing multiple tasks efficiently, such as time, energy/resource consumption, etc. Additionally, the activity also incorporates a preference over how efficiently each individual task is accomplished. Unlike a human, an autonomous robot may not be able to decipher how to efficiently work on several tasks simultaneously, as well as how to complete the tasks in the preferred manner. This paper aims to develop a computational framework for analyzing optimal trade-offs between \textit{cost} and \textit{preference}.
% \kw{I feel like each sentence in this paragraph is not well connected.}

Consider a robotic manipulator working in a restaurant as a cook. The robot receives several food orders. Firstly, the robot should cook all dishes in a minimal time. Secondly, the customers expect that the dishes are served according to the sequence in which each customer placed the order. It is likely that preparing and cooking ingredients simultaneously would be very efficient, but may not respect the order in which the customers should receive their dish. Thus, the robot must determine the appropriate optimal trade-offs between minimizing cooking time and adhering to the preferred order. 

A popular approach to express complex tasks in robotics is the use of temporal logics, 
namely \textit{Linear Temporal Logic} (LTL) \cite{BaierBook2008} for its expressive power \cite{Lahijanian:AR-CRAS:2018}. Planning with LTL 
specifications has been widely explored for manipulation applications \cite{He:ICRA:2015,He:IROS:2017,Muvvala:2022:ICRA}, and mobile 
robotic applications \cite{Kress:TRO:2009,Lahijanian:AR-CRAS:2018,Lahijanian:ICRA:2009}. 
Traditionally, a major limitation of those methods is the inability to scale to large, more realistic applications.  In addition,
because these applications aim to provide formal guarantees, they fail to reason about preferences or complex soft-constraints when planning with multiple tasks.

Recent work has introduced several notions of 
\textit{preference} in planning applications \cite{jorge2008planning}. A \textit{preference} can generally be described through quantitative (cost-based) or qualitative (comparison-based) measures. Qualitative preferences can be expressed through structures such as a Hierarchical Task Networks \cite{sohrabi2008planning,sohrabi2009htn,sohrabi2010preference,georgievski2014overview}, where the user is responsible for qualitatively comparing specification formulae through a binary relation. 
These methods struggle to be able to plan using an \textit{incomplete} binary relation, i.e., there exist at least two outcomes that cannot be compared, while reasoning about temporal objectives. 
Work \cite{kulkarni2022opportunistic} tackles planning with incomplete preferences over temporal goals. The quality of the outcome may be 
% \km{consider changing it to ``is" to sound more confident?}
dependent on how extensive the user's given binary relation is. Since it is cumbersome for a user to provide an exhaustive binary relation for many outcomes, an alternative is to quantify the user's preference. 

Work \cite{bienvenu2006planning} introduces a language to compare qualitative preferences over temporal objectives where the user qualitatively annotates the level of preference for each formula. Similarly, the quantitative preference planning problem can be posed as a partial satisfaction \cite{lahijanian2015time, rahmani2019optimal} or minimum violation/maximum realizability \cite{vasile2017minimum, kamale2021automata, benton2012temporal, mehdipour2020specifying} planning problem. %\pa{Discuss the limitations of PS approaches in terms of the way we extend it} 
These approaches generally require the user to define costs for certain temporal outcomes based on the tasks alone. Many approaches look to Planning Domain Definition Language (PDDL) 
\cite{aeronautiques1998pddl} to specify quantitative preferences \cite{baier2009heuristic, gerevini2009deterministic, seimetz2021learning}. While these methods provide a framework for reasoning over temporal preferences, they are limited to reasoning about Boolean temporal behaviors, and are unable to reason about the efficiency and performance of the robotic system.

This work aims to provide a new perspective on preference-based planning that allows the user to quantify a preference with respect to behavior of an actual robotic system with respect to each given task. 
% we reason about a two planning objectives: 1) a user-defined \textit{preference objective}, and 2) a cost objective that models the efficiency of the system.
For example, the user can express their preference over the cumulative cost of completing each task in relation to the other tasks. 
Hence, in our framework, we reason about two planning objectives: (i) a user-defined \textit{preference objective}, and (ii) a cost objective that models the efficiency of the system.
%This work aims to provide a new perspective on preference-based planning that allows the user to express a preference with respect to \pa{quantitatively represent and express a preference over robotic behavior} the (cost of) behavior of an actual robotic system. In our framework, it is assumed that all individual tasks can 
% \km{can? we assume the robot can complete all tasks while enforcing that the robot must satisfy the tasks.} 
%be completed, and the user can express their preference over the cumulative cost of completing each task in relation to the other tasks \pa{generalize this to using a preference objective with the cost objective}. 
Under this novel preference formulation, an optimal trade-off analysis can be performed to illustrate the robot's behavior for varying levels of preference objective adherence. 
% Since computing the entire Pareto front is computationally intensive, 
We first introduce an adaptation of the $A^*$ search algorithm to efficiently synthesize a single optimal plan given a constraint on one objective, depending on how flexible the user's preference is. To compute the full Pareto front, we  leverage an adaptation of a recent bi-objective $A^*$ algorithm $BOA^*$ (Bi-Objective $A^*$) \cite{ulloa2020simple}, based off of the multi-objective $A^*$ search algorithm $NAMOA^*$ (New Approach to Multi-Objective $A^*$) \cite{mandow2008multiobjective}.

%Our approach to synthesizing Pareto optimal plans can be formulated as a bi-objective graph search problem. Recent work builds on a foundational multi-objective $A^*$ search algorithm $NAMOA^*$ which determines the Pareto-optimal solution set of \textit{non-dominated} paths. For these problems, each objective is captured by an element in a cost vector that ``weights'' each edge. Unlike scalar edge weights, two cost vectors have no intrinsic binary relation, thus algorithms based on $NAMOA^*$ rely on an incomplete comparison known as \textit{dominance}. 

The contributions of this paper are as follows. Firstly, we introduce a novel generalized notion of preference over the cost of satisfying each individual task. 
%Second, given the user's preference, we introduce an algorithm for synthesizing a plan given a preference constraint, as well as an algorithm for computing all optimal trade-off plans.  :( 
Second, 
% since both computing a plan and the entire Pareto front are very computationally intensive,
we define a problem-agnostic heuristic to speed up the computation time considerably. 
Third, we adapt the $BOA^*$ algorithm for computation of Pareto optimal solutions.
Lastly, we demonstrate our novelties by using case studies on both mobile robots and manipulators, as well as the effectiveness of the heuristic using benchmarks with results of up to 2-orders of magnitude speedup. 
\section{Problem Formulation}
\label{sec: problem_formulation}

% \ml{Consider a robot with an ordered sequence of complex tasks. One objective of this work is to compute an optimal plan for the robot that achieves all the tasks in the correct order. Nevertheless, it is often the case that completing the tasks strictly in order may be very inefficient. Relaxing the constraint that the tasks must be completed in order allows a more cost-effective plan.  The second objective of this work is to introduce such a flexibility parameter that allows the user to indicate their preference. Finally ...}

% This planning problem can be reduced to a synthesis problem with two optimization objective criteria: \textit{cost} and \textit{flexibility}. 

% This paper studies how to determine a sequence of actions/controls (plan) that efficiently completes a set of tasks according to a user's preference. For this problem, the user expresses their preference according to the individual costs of completing each task. However, it is often the case that completing the tasks while strictly adhering to the user's preference may be extremely inefficient. Relaxing the constraint that the tasks must be satisfied according to the most preferred behavior allows the planner to find a more cost-effective plan with regard to the total cost. This planning problem can be reduced to a synthesis problem with two optimization objective criteria: \textit{cost} and \textit{preference}. 

% \pa{intuit strict task ordering then begin relaxing it (flexibility)}

\subsection{Robot Model}
We consider an abstracted model for a robot given as a transition system. Such abstractions are commonly used and constructed in formal approaches to both mobile robots \cite{Lahijanian:AR-CRAS:2018,Hadas:ICRA:2007,Lahijanian:ICRA:2009} and robotic manipulators \cite{He:ICRA:2015,He:RAL:2019}. 
% In order to make use of discrete-time temporal logic task specifications, the robotic system is assumed to be a discrete abstraction of the physical states and capabilities. Formally, this system is modeled as a \textit{Weighted Deterministic Transition System}.

\begin{definition}[WTS]
    A \emph{Weighted Transition System} (WTS) is a tuple $T = (S, s_0, A, \delta_T, c, AP, L)$ where
    \begin{itemize}
        \item $S$ is a finite set of states,
        \item $s_0 \in S$ is an initial state,
        \item $A$ is a finite set of actions,
        \item $\delta_T : S \times A \mapsto S$ is a transition function,
        \item $c : S \times A \mapsto \mathbb{R}_{\geq 0}$ is a cost function that assigns a non-negative weight to every state-action pair,
        \item $AP$ is a set of atomic propositions that is related to the robot task, and
        \item $L : S \mapsto 2^{AP}$ is a labeling function that maps each state to a subset of propositions in $AP$ that are true at that state.
    \end{itemize}
\end{definition}

A plan $\plan = \plan_0 \plan_1 \cdots \plan_{m-1}$ is a sequence of actions, where $m \geq 1$ and $\plan_k \in A$ for all $0\leq k \leq m-1$.
% A trajectory $\traj = \traj_0 \traj_1 \ldots \traj_m$, where $\traj_k \in S$ for all $0\leq k \leq m$, is a finite sequence of states on WTS $T$. 
% Trajectory $\traj$ is called \textit{valid} on $T$ iff $\tau_0 = s_0$ and there exists a plan $\plan$ such that 
A trajectory $\traj = \traj_0 \traj_1 \cdots \traj_m$ is a finite sequence of states on WTS $T$, where $\traj_k \in S$ for all $0\leq k \leq m$.  
A plan is called \textit{valid} if it produces a trajectory $\traj^\plan = \traj_0^\plan \cdots \traj_m^\plan$  such that $\traj_0^\plan = s_0$ and $\traj_{k+1}^\plan = \delta_T(\traj_k^\plan, \plan_k)$ for all $0 \leq k \leq m-1$.
% represented using the function $\traj = \exec{\plan}$. %, is a finite sequence of states on WTS $T$. Trajectory $\traj$ is called \textit{valid} on $T$ iff $\tau_0 = s_0$ and there exists a plan $\plan$ such that
%$a_0 a_1 \ldots a_{m-1}$ such that 
%$\traj_{k+1} = \delta_T(\traj_k, \plan_k)$. The valid trajectory produced from plan $\plan$ is denoted $\traj = \exec{\plan}$.
The \textit{observation} of $\traj^\plan$
% , represented using the function $\trace = \obs{\traj}$, 
is a trace $\trace^\plan = \trace_0^\plan \cdots \trace_{m}^\plan$ where $\trace_k^\plan = L(\traj_k^\plan) \subseteq AP$ for all $0\leq k \leq m$. 
Lastly, the \textit{total cost} of trajectory $\traj^\plan$, denoted by $\Cost{\traj}$, is the sum of all the state-action costs along $\traj^\plan$, i.e., 
\\
\vspace{-5mm}
\begin{equation}
    \label{eq:total cost}
    \Cost{\traj} = \sum_{k=0}^{m-1} \cost{\traj_k^\plan, \plan_k}.
\end{equation}
%$\traj = \traj_0 \traj_1 \ldots \traj_{m}$ such that $\traj_0 = s_0$ and for all $0 \leq i \leq m$.

\begin{example}
    \emph{
    Consider a 2D mobile robot in a grid environment tasked with collecting resources as shown in Fig. \ref{fig: gridword_ex}. 
    % Fig. \ref{fig: gridword_ex} shows a graphical representation of the WTS for the robot.
    Each state is a 2D coordinate $(x,y)$, denoted by $\exstate{x}{y}$, where the initial state $s_0 = \exstate{0}{0}$. There are 4 cardinal direction actions $A = \{North, South, East, West\}$. Function $\delta_T$ encodes transitions between neighboring cells based on the actions. Each transition is weighted by a cost objective $c$. 
    %The labeling function $L$ simply returns a single atomic proposition marking that the robot is visiting a given state, denoted $\exobs{x}{y}.$
    %\ml{make the labels more interesting... this does not show the power of the framework}
    The labeling function $L$ marks the location of the resources, where $L(\exstate{2}{0})=\{dirt\}$, $L(\exstate{1}{1})=\{plant\}$, $L(\exstate{1}{2})=\{rock\}$, $L(\exstate{2}{2})=\{charge\}$, and $L(x) = \emptyset$ otherwise.
    % A graphical representation of the WTS for a $3 \times 3$ grid-world is shown in Fig. \ref{fig: gridword_ex}.
    }
    \label{ex:wts}
\end{example}
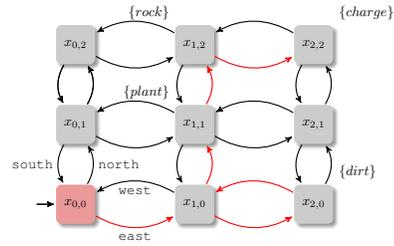
\begin{figure}[t!]
    \centering
    \begin{adjustbox}{scale=0.52}
    \centering
    \begin{tikzpicture}[shorten >=1pt,node distance=6cm,>=stealth',thick, auto,
                        every state/.style={fill,very thick,black!20,text=black,\shadowString},
                        robot/.style = {fill,very thick,black!20,rounded corners, text=black, shape=rectangle, minimum height=1cm,minimum width=1cm, blur shadow},
                        accepting/.style ={blue!50!black!50,text=white,accepting by double},
                        initial/.style ={red!80!black!40,text=black,initial by arrow, initial left}, initial text=$ $]
        % first column                
        \node[robot, initial] (s1) {$x_{0,0}$};
        \node[robot] (s2) [above=1cm of s1] {$x_{0,1}$};
        \node[robot] (s3) [above=1cm of s2] {$x_{0,2}$};
        % second column
        \node[robot] (s4) [right=2cm of s1] {$x_{1,0}$};
        \node[robot, label=above left:\texttt{$\{plant\}$}] (s5) [above=1cm of s4] {$x_{1,1}$};
        \node[robot, label=above left:\texttt{$\{rock\}$}] (s6) [above=1cm of s5] {$x_{1,2}$};
        %third column
        \node[robot,, label=above right:\texttt{$\{dirt\}$}] (s7) [right=2cm of s4] {$x_{2,0}$};
        \node[robot] (s8) [above=1cm of s7] {$x_{2,1}$};
        \node[robot, label=above right:\texttt{$\{charge\}$}] (s9) [above=1cm of s8] {$x_{2,2}$};

        \path[->]
           (s1) edge [bend right, below, red] node [black] {\texttt{east}} (s4)
           (s4) edge [bend right, above] node [below] {\texttt{west}} (s1)

           (s1) edge [bend right, below]  node [right] {\texttt{north}} (s2)
           (s2) edge [bend right, above]  node [left] {\texttt{south}}(s1)

           (s2) edge [bend right, below] (s3)
           (s3) edge [bend right, above] (s2)

           (s2) edge [bend right, below] (s5)
           (s5) edge [bend right, above] (s2)

           (s4) edge [bend right, below, red] (s7)
           (s7) edge [bend right, above, red] (s4)

           (s2) edge [bend right, below] (s3)
           (s3) edge [bend right, above] (s2)

           (s3) edge [bend right, below] (s6)
           (s6) edge [bend right, above] (s3)

           (s6) edge [bend right, below] (s5)
           (s5) edge [bend right, above, red] (s6)

           (s5) edge [bend right, below] (s4)
           (s4) edge [bend right, above, red] (s5)

           (s5) edge [bend right, below] (s8)
           (s8) edge [bend right, above] (s5)

           (s8) edge [bend right, below] (s7)
           (s7) edge [bend right, above] (s8)

           (s6) edge [bend right, below, red] (s9)
           (s9) edge [bend right, above] (s6)

           (s9) edge [bend right, below] (s8)
           (s8) edge [bend right, above] (s9);
    \end{tikzpicture}
    \end{adjustbox}
\caption{Weighted Transition System from Example~\ref{ex:wts}. The red arrows illustrate a satisfying plan for Example~\ref{ex:formulae}.}
\label{fig: gridword_ex}
\vspace{-3 mm}
\end{figure}

%\subsection{Preference Objectives} \label{sec:preference-objectives}
%\ml{this subsection it out of place}
%In this work, we study two realistic trade-off optimization problems. Given a set of tasks, find a plan that minimizes the \textit{total cumulative cost} while best adhering to the user's preference over
%\begin{enumerate}
%    \item the cost of satisfying each individual task, referred to as the \textit{task-efficiency preference} (TEP) problem, or
%    \item minimizing the violation of each task, referred to as the \textit{task-violation preference} (TVP) problem.
%\end{enumerate}
%The TEP objective can capture preferences such as a desired order of satisfaction among tasks, prioritizing the efficiency of select tasks using priority-weights, etc.
%The TVP problem studies the robot's behavior when partially satisfying tasks may result in less total cumulative cost \ml{or when all tasks are not fully achievable (e.g., conflicting tasks), hence violations to some of them are required}. TVP objective can capture preferences such as incentives for certain tasks to remain below a violation threshold, prioritizing full-satisfaction of select tasks, and others.

\subsection{Task Specification}
%We are interested in a setting where the robot is given multiple tasks.
We are interested in a setting where the robot is given multiple tasks, each being a temporally extended goal that the robot must reach in finite time.
%Each task is a temporally extended goal that the robot must reach in finite time.
% , referred to as a liveness objective.
% Linear Temporal Logic reasons over the system's behavior in a completely unambiguous manner, while still allowing the user to be very expressive with what they would like the robot to accomplish. 
To specify such tasks, we use
\textit{Co-Safe Linear Temporal Logic} (\csltl) \cite{kupferman2001model}, which is a fragment of LTL \cite{baier2008principles}
% is restricted to reasoning about liveness objectives. 
that can be satisfied in finite time.
% A co-safe LTL formula $\phi$ is defined over the set of atomic propositions $AP$. 
% Formula $\phi$ can be recursively defined as follows:
\begin{definition}[\csltl Syntax \cite{kupferman2001model}]
    A \emph{syntactically Co-Safe Linear Temporal Logic} (\csltl) formula over $AP$ is recursively defined as
    \begin{equation*}
        \phi = o \mid \neg o \mid \phi \land \phi \mid \phi \lor \phi \mid X\phi \mid \phi\, U \phi 
    \end{equation*}
    where $o \in AP$, $\neg$ (negation), $\land$ (conjunction), and $\lor$ (disjunction) are Boolean operators, and $X$ (next) and $U$ (until) are temporal operators.
    % , $F$ (eventually),
    \label{def:csltl}
\end{definition}

\noindent
The commonly-used temporal operator ``eventually" ($F$) can be defined as $F\phi \equiv true \, U \phi$.
The semantics of \csltl are defined over infinite traces, but only a finite prefix of a trace is needed to satisfy a \csltl formula \cite{kupferman2001model}.
% i.e., only need finite behaviors to achieve \csltl tasks.
We denote the prefix of a trace $\trace$ up to time step $k$ by $\trace[k] = \trace_0 \ldots \trace_k$.
% The satisfaction of formula $\phi$ by trace $\trace$ at time step $k$ is denoted by $\trace[0\ldots k] \models \phi$.

The robot is given $N > 1$ tasks $\phi_1, \ldots, \phi_N$. Ideally, we want the robot to achieve all the tasks, but it may be possible that some of the tasks are conflicting or  unachievable in an environment.  For ease of presentation, we assume the robot can complete all $N$ tasks, but we emphasize that our framework can be adapted for scenarios that tasks are not fully achievable using notions of \emph{partial satisfaction} introduced in \cite{Lahijanian:aaai:2015,Lahijanian:TRO:2016,tumova2013least,kim2013minimal}. We remark the required changes for this adaption throughout the paper.
% However, for some robotic applications, it may be challenging to specify multiple \csltl formulae that do not conflict (i.e. satisfying a task may violate another task). 

% , a robot trajectory that produces a resulting trace that satisfies the conjunction of the formulae.  
%For the purposes of this paper, the tasks are assumed to be non-contradictory, 
%i.e., 
%$\exists \trace \in (2^{AP})^*$ s.t. $\trace \models \wedge_{i=1}^N \phi_i$. 
% \phi_1 \land \phi_2 \land \ldots \land \phi_N$.

% \ml{need to say that we consider $N$ tasks that are none conflicting... that is satisfying one does not violate another, i.e., if $\trace \models \phi_i$, there exists an extension of $\trace$ that satisfies $\phi_j$ for all $\phi_j \in \Phi$.}

% \ml{make a remark}...

%\begin{definition}[Satisfying Trajectory]

\begin{definition}[Satisfying Plan]
    %Given a set of \csltl formulae $\Phi = \{\phi_1, \ldots, \phi_N\}$, trajectory $\pi$ satisfies $\Phi$, denoted by $\pi \models \Phi$, if $\pi$ is a valid trajectory and its observation trace $\trace^\pi$ satisfies all formulae in $\Phi$, i.e., %$\phi_1 \wedge \ldots \wedge \phi_N $, i.e., 
    Given a set of \csltl formulae $\Phi = \{\phi_1, \ldots, \phi_N\}$, plan $\plan$ satisfies $\Phi$, denoted by $\pi \models \Phi$, %(respectively, for partial satisfaction, $\pi \partialmodels \Phi$), 
    if it produces a valid trajectory $\traj^\pi$, whose observation trace $\trace^\plan$ satisfies all formulae in $\Phi$, i.e., %$\phi_1 \wedge \ldots \wedge \phi_N $, i.e., 
    $\trace^\pi \models \wedge_{i=1}^N \phi_i$  
    %(\pa{resp. $\trace^\pi \partialmodels \wedge_{i=1}^N \phi_i$}).
    Then, we say $\traj^\pi$ satisfies $\Phi$ denoted by $\traj^\plan \models \Phi$. %(\pa{resp. partially-satisfies $\Phi$ denoted by $\traj^\plan \partialmodels \Phi$}).
    
    % \ml{can a plan be just a trajectory?} \pa{keeping this as action seq at the moment}
    % such that
    % \begin{itemize}
    %     \item $\forall k\in \{1, \ldots, \tau\}$, $s_k = \delta_T(s_{k-1}, a_k)$ produces a valid finite trajectory $r_\pi$ and
    %     \ml{fix this statement (notations)... do you mean the plan produces a valid trajectory?  We already said that a plan produces a trajectory right after Def. 1.  Should }
    %     \item $\bar{\trace}_\pi = \trace_1 \trace_2 \ldots \trace_\tau \models \phi_1 \land \ldots \land \phi_N$ where $\trace_k = L(s_k)$.
    %     %\ml{how does this guarantee satisfaction of the order?}
    % \end{itemize}
\end{definition}
\noindent

\begin{remark}
    \label{remark: conflicting tasks}
    In the case that there exist tasks that are not achievable, instead of \emph{satisfying plan}, the notion of \emph{partially-satisfying plan} can be adopted from \cite{Lahijanian:aaai:2015}, which also quantifies \textit{distance to satisfaction} for each task.
\end{remark}

% (Remark: For partial-satisfaction applications, a \textit{satisfying-plan} can be replaced with a \textit{partially-satisfying} plan)
%\pa{Partial-satisfaction (PS) expands the alphabet of a \csltl formula to provide alternate traces that satisfy the formula but incur a violation cost. Given a formula $\phi$ with alphabet $\mathcal{A}$ along with an expanded alphabet $\mathcal{A}_{PS} \supseteq \mathcal{A}$, distinguishing between satisfaction ($\models$), partial satisfaction ($\partialmodels$), and dissatisfaction ($\not\models$) for a given trace $\trace$ is well defined as:
%    $\trace \models \phi \iff \trace \in \mathcal{A} \land \trace \not\in \mathcal{A}_{PS} \setminus \mathcal{A}$,
%    $\trace \partialmodels \phi \iff \trace \in \mathcal{A}_{PS}$, and
%    $\trace \not\models \phi \iff \trace \not\in \mathcal{A}_{PS}$ respectively.
%}
%
%\pa{For PS, the user provides cost function $\pscost : 2^{AP} \times 2^{AP} \mapsto \mathbb{R}_{\geq 0}$ that weights the cost of substituting an observation $\trace$ with $\trace'$ where $\eta(\trace, \trace') = 0$ if $\trace = \trace'$. For consistency, we say substitutions that are not allowed incur a cost of $\infty$.}
%The set of all satisfying trajectories $r_\pi$ is denoted $R$.
%A satisfying plan guides the robotic system to satisfaction of all given task specifications. 

\begin{example}
    \emph{
    Following from Example \ref{ex:wts}, consider formulae:
    % \begin{enumerate}
    %     \item $\phi_1 = F\: charge$, 
    %     \item $\phi_2 = F\:(plant \land F\:rock)$
    %     \item $\phi_3 = \neg plant\:U\:dirt$
    % \end{enumerate}
    $\phi_1 = F\: charge$, \quad 
    $\phi_2 = F\:(plant \land F\:rock),$  \quad
    and $\phi_3 = \neg plant\:U\:dirt.$
    Translated to English, the formulae state that the robot must: ($\phi_1$) \emph{eventually $charge$}, ($\phi_2$) \emph{eventually collect $plant$, then collect $rock$ in that order}, and ($\phi_3$) \emph{do not collect $plant$ until $dirt$ has been collected.} An example of a satisfying plan is $\plan =$ \emph{East East West North North West}. 
    %\pa{For a PS task, say the collection of \textit{rock} in $\phi_2$ is not critical, and can be substituted with collecting dirt instead. If state $x_{1,2}$ is unreachable, then $\plan_2 =$ \emph{East East West North East North} will satisfy the PS problem, but incur a violation cost.} }
    }
    % Fig.~\ref{fig: gridword_ex} shows 
    % % exec($\pi$) 
    % $\traj^\plan$
    % using red arrows. %e\ e\ w\ n\ n\ e$.
    \label{ex:formulae}
    %\vspace{-2mm}
\end{example}
% \noindent
% let $k_i \geq 0$ be the length of the smallest prefix of $\pi$ that satisfies $\phi_i$, i.e., $\trace^\pi [k_i] \models \phi_i$ and $\trace^\pi [k_i-1] \not\models \phi_i$.  

\subsection{Preferences}
\label{sec:preferences}
%We can now formulate our generalized notion of preference.
In this work, we are interested in plans that not only satisfy the tasks, but are also optimal with respect to two objectives: total cost in \eqref{eq:total cost} and user preferences over the cumulative action-cost of satisfying individual tasks. 

%\begin{enumerate}
%    \item the cost of satisfying each individual task, referred to as the \textit{task-efficiency preference} (TEP) problem, or
%    \item minimizing the violation of each task, referred to as the \textit{task-violation preference} (TVP) problem.
%\end{enumerate}
%Additionally, for the TEP problem, we require that the robot \textit{completely} satisfy all $N$ tasks. For the TVP problem, we relax this assumption, requiring the robot to \textit{partially} satisfy all $N$ tasks.

These objectives are possibly competing, i.e., by optimizing one, the other becomes sub-optimal. To formulate this problem, we first formalize the notions of individual cost and user preference.

% a satisfying plan that minimizes the total cost (equation \ref{eq:total cost}).
% However, this plan may not necessarily adhere to the user's preferred behavior, and thus we must look to quantify the user's preference.
Given task set $\Phi$ and a 
% \pa{cant restrict to satisfying plan because satisfying plan implies all tasks are satisfied} 
plan $\pi$, we are interested in the cost of trajectory $\traj^\plan$ relative to each task $\phi_i \in \Phi$. Let $K_i \geq 0$ be the length of the smallest prefix of $\trace^\plan$ that satisfies $\phi_i$, i.e.,
%in the TEP problem (resp. partially satisfies $\phi_i$ in the TVP problem). Formally, 
$\trace^\pi [K_i] \models \phi_i$ 
%(resp. $\trace^\pi [K_i] \partialmodels \phi_i$) 
and $\trace^\pi [K_i-1] \not\models \phi_i$.  
Then, with an abuse of notation, we define the cost of $\traj^\pi$ with respect to $\phi_i$ %, denoted $\Cost{\traj^\pi,\phi_i}$, for each problem 
to be \\
\vspace{-2mm}
\begin{equation}
    \label{eq:cost-phi}
    \Cost{\traj^\pi,\phi_i} = \sum_{k=0}^{K_i} c(\traj_k,\plan_k).
\end{equation}
\noindent

%Similarly, for the TVP problem, we define the cost of $\traj^\pi$ with respect to $\phi_i$ to be \\
%\begin{equation}
%    \label{eq:cost-phi-ps}
%    \CostTVP{\traj^\pi,\phi_i} = \sum_{k=0}^{K_i} \eta(\trace_k).
%\end{equation}
%\pa{generalize $c$ to be a predefined transition cost function defined over finite transition state history and current action $\Cost_m{\traj^\pi,\phi_i} = \sum_{k=0}^{K_i} c(\traj_0...\traj_k,\plan_k)$ where $c_{objective_m} : S^* \times A \mapsto D_m\subseteq \mathbb{R}$}
% $$\Cost{\traj,\phi_i} = \sum_{i=0}^{k_i} c(\traj_i,a_i).$$

% Note that if $\traj^\pi \not\models \phi_i$, then $K_i$ is undefined and, consequentially, so is $\Cost{\traj^\pi,\phi_i}$. 

\begin{remark}
    For partial-satisfying plan, the cost of $\Cost{\traj^\pi,\phi_i}$ in \eqref{eq:cost-phi} can instead be derived from the notion of \textit{distance to satisfaction} for each task as in \cite{Lahijanian:aaai:2015}.
\end{remark}

We can now present a general notion of how users can articulate their preferences over \emph{any} trajectory by utilizing a single ordered set of individual task costs.

%\ml{is this paragraph needed now that I've made changes to the above paragraph?}
%A plan must satisfy all of the tasks; however it may or may not adhere to the user's preference over the cost-to-satisfaction 
%\ml{cost of satisfaction?}
%for each individual task.
%Once \csltl formulae are satisfied, they remain satisfied for all time. Because of this property, %the cost of satisfying an individual task $\phi_i$ is the cost of executing the partial trajectory $\rho_i$ from $s_0$ to the first time step the formula was satisfied.
%when analyzing a trajectory $\traj$, it is important to consider the minimal satisfying prefix trajectories $\rho_i$ corresponding to each task $\phi_i$, representing the partial trajectory from $s_0$ to the first time step the formula was satisfied.
%For any given trajectory $\traj$, corresponding trace $\obs$, and formula $\phi_i$, if $\obs \models \phi_i$, then there exists a satisfaction time step $k$ such that $\obs_0 \ldots \obs_k \models \phi_i $ 
%and $\obs_0 \ldots \obs_{k-1} \not\models \phi_i$. %The partial trajectory $\rho_i=\traj_0 \ldots \traj_k$ is referred to as the minimal satisfying prefix corresponding to $\phi_i$.

%There may be many application-specific ways to use each task cost $\Cost{\traj,\phi_i}$ to quantify a notion of preference. We present a general notion of how the user can articulate their idea of preference into a single parameter by suing the vector of individual task costs.

\begin{definition} [Preference Cost Set] \label{def:pcs}
    %\ml{this def is unclear. need to explain it in both words and math}
    % A \emph{preference cost vector} (PCV) $\pcs$ is a vector of costs $\pcs = (c_1, \ldots, c_N)$ indexed by the respective formula $\phi_i \in \Phi$ where \kw{Define  $\rho_i$.}
    %\begin{equation}
    %    c_i = 
    %    \begin{cases}
    %        c(\rho_i) & \text{iff } \obs \models \phi_i \\
    %        c(\traj) & \text{otherwise.}
    %    \end{cases}
    %    \label{eq:cost_set}
    %\end{equation}
    %the minimal satisfying prefix is the partial trajectory $\rho_i = \traj[0...k]$ for $1<k\leq m $ corresponding to the respective formula $\phi_i$, where . If $\obs{\traj} \not\models \phi_i$, then the cost prefix is simply the entire trajectory $\rho_i = \traj$.
    %trajectory $r_\pi = s_0 s_1 s_2 ... s_\tau$, and observed trace $\bar{\trace}_\pi = \trace_1 \trace_2 ... \trace_\tau$, the \textit{satisfaction order} (i.e. $\phi_2 \rightarrow \phi_4 \rightarrow \phi_1 ...$) is an ordering $\preceq$ of the cost of each minimal satisfying prefix $\rho_n$.  The ordering $\preceq$ is defined as:
    %\begin{equation}
    %    \phi_n \preceq \phi_m \iff c(\rho_n) \leq c(\rho_m)
    %\end{equation}

    The \emph{preference cost set} (PCS) of task set $\Phi = \{\phi_1, \ldots, \phi_n\}$ by a trajectory $\traj$ is a vector of costs $\pcs(\traj,\Phi) = (c_1, \ldots, c_N)$, where
    $c_i = \Cost{\traj,\phi_i}$ if $\traj \models \phi_i$, otherwise, $c_i = \Cost{\traj}$.
    % \begin{equation}
    %     c_i = 
    %     \begin{cases}
    %         \Cost{\traj,\phi_i} & \text{if } \traj \models \phi_i \\
    %         \Cost{\traj} & \text{otherwise.}
    %     \end{cases}
    %     \label{eq:cost_set}
    % \end{equation}
    
    \label{def:sat_schedule}
\end{definition}
%\pa{\textit{Remark:} Since the cost with respect to each task (Eq. \ref{eq:cost-phi} and Eq. \ref{eq:cost-phi-ps}) for a trajectory depends on only the current time-step $k$, the PCS $\pcs(\traj, \Phi)$ can be alternatively expressed as the cumulative sum of each transition PCS, i. e., $\pcs(\traj, \Phi) = \sum_$}
%Given any trajectory $\traj$, a corresponding PCV can be constructed, denoted $\pcst{\traj}$. 
%\ml{this is not needed the new def}
%Sorting a given cost set $C$ representing a satisfying trajectory will yield the order in which the tasks were satisfied. Additionally, the satisfaction schedule also captures the cost separation between each objectives.
%The cost set of a trajectory implicitly captures the satisfaction order as well the cost separation between objectives.

% \subsection{Preference Function}
%The cost objective is easy to quantify, however quantifying the user's preference between two PCV's is not trivial. 

\noindent
%\jk{$c$ is already used. Can we pick a different letter?}\kw{Maybe simply use $C_i$ to represent $i$th trajectory cost?}
The user defines their preference by constructing a function that assigns a non-negative value to $\pcs(\traj,\Phi)$. The lower the value of the function is, the more preferred $\traj$ is. 
% \jk{The definition, equation, and above paragraph switch between $\tau$ and $\tau^\pi$. Please pick one and adjust accordingly.}
% , and the maximum preference-cost does not necessarily adhere to any user preference.
\begin{definition}[Preference Function]
    Given a task set $\Phi$ and a valid trajectory $\traj$, a \emph{preference function} is a mapping $\mu : \mathbb{R}_{\geq 0}^N \mapsto \mathbb{R}_{\geq 0}$ that assigns a preference value to PCS $\pcs(\traj, \Phi)$
    % \jk{same comment as before. Need to be consistent within a Definition.} 
    according to how well it adheres to the user preference while satisfying two criteria:
    \begin{enumerate}
        \item $\mu$ is monotonically increasing along every valid trajectory, i.e., $\mu(\pcst{\traj_0 \ldots \traj_m, \Phi}) \leq \mu(\pcst{\traj_0 \ldots \traj_m \traj_{m+1}, \Phi})$ \;  $\forall$ $m \geq 0$, and
        \item given two trajectories $\traj_1$ and $\traj_2$, $\mu(\pcst{\traj_1, \Phi}) < \mu(\pcst{\traj_2, \Phi})$ iff $\pcst{\traj_1, \Phi}$ is preferred over $\pcst{\traj_2, \Phi}$. %, denoted $\pcst{\traj_1} \succeq \pcst{\traj_2}$.
    \end{enumerate}
        
    \label{def:pref_func}
\end{definition}

%Definition \ref{def:pref_func} is general to any application of a user's preference.

This method of expressing preference is general to many applications of a user's preference. Since the user defines a function rather than values over PCS, it provides a flexible framework to capture many types of realistic preferences. %over tasks such as order, partial order, selective priority, etc. 
For example, by simply combining each task cost (or the difference between task costs), a preference function can capture preferences such as a desired order of satisfaction among tasks, prioritizing the efficiency of select
tasks using priority-weights, etc. 
%The TVP problem studies the robot's behavior when partially satisfying tasks may result in less total cumulative cost \ml{or when all tasks are not fully achievable (e.g., conflicting tasks), hence violations to some of them are required}. 

\begin{remark}
    By deriving PCS from partial-satisfaction costs, preferences such as incentives for certain tasks to remain below a violation threshold, prioritizing full-satisfaction of select tasks, etc. can be expressed.
\end{remark}

% Then, the user only need to provide a preference function. An example of this procedure is shown in example \ref{ex:pref_func}.

\begin{example}
    \emph{
    Imagine the user would like to quantify how ``out-of-order" the tasks are satisfied compared to the desired order of satisfaction, for example, the order of appearance in Example \ref{ex:formulae} ($\phi_1$ should be satisfied first, etc). 
    %Let the order of appearance of the formulae in example \ref{ex:formulae} represent the user's desired order of satisfaction, i.e., formula $1$ should be satisfied first, etc. 
    %\pa{This still might be too mathy but it is much simpler to understand; also I think it would be nice for the example to not just show an example computation, but also an example of the function formulation since the user would want to know how to do both}
    %There can be many ways to use $\mu$ to quantify how "out-of-order" a plan is. 
    %\ml{can't follow it... should clarify it}
    Consider a given input PCS $\pcs = (c_1, ... c_N)$, for example $\pcs = (20, 5, 10)$.
    Let the ``ideal" PCS $\pcs^* = (c_1^*, ... c_N^*)$ be equal to the sorted version of $\pcs$, e.g., $\pcs^* = (5, 10, 20)$. 
    % Note that $\pcs = \pcs^*$ represents the most-preferred behavior. 
    A positive element $d_i$ in the resultant vector $\mathcal{D} = \pcs - \pcs^* = (15, -5, -10)$ indicates that $\phi_i$ was delayed (out of order) by $d_i$ units of cost. In this case, $\phi_1$ was delayed by $15$ units. Thus, a preference function $\mu$ can be defined as the sum of all positive elements of $\mathcal{D}$; in this case $\mu(\pcs)=15$.
    % \ml{another: preferring the delay of the tasks be bounded?}
    }
    
    %\jk{I am not sure defining a delay is a good idea to do inside an Example environment. Perhaps re-structure this example and give it a definition environment}
    %\begin{equation}
    %    \mu(\pcs) = \sum_{i=1}^N \max(d_i, 0).
    %\end{equation}
    %\pa{The goal is to give an example of 1) how to mathematically form a preference function and 2) how it might be calculated for our running example. the user is resonsible for actually math defining \mu instead of just seeing how our formulation operates numerically. I see
    %For example, a plan $\plan$ where $\phi_1$ was satisfied in 20 units, $\phi_2$ was satisfied in 5 units and $\phi_3$ was satisfied in 10 units. The PCV for plan $\plan$ is $(20, 5, 10)$ and the "ideal" PCV is $(5, 10, 20)$. The resulting preference-cost for $\plan$ is then $25$ units.\jk{This is a good example of preference cost but (intuitively) relate if back to our gridworld robot tasks}
    \label{ex:pref_func}
\end{example}

\subsection{Cost and Preference Objectives}
% Recall, given a set of \csltl liveness task specifications $\Phi$, a satisfying plan $\plan$ can be qualified using two objectives: \textit{cost} and \textit{preference}. Specifically, the planner should minimize:
% \begin{enumerate}
%     \item $\cost{\plan}$, and
%     \item $\mu(\pcst{\plan})$.
% \end{enumerate}
% In most cases, $\plan$ may not be optimal with respect to both objectives. In this situation, cost and preference become competing objectives, where the relaxation of cost might allow for more preferred behavior, and visa versa.
    
    %The \textit{cost} associated with $\plan$ is simply the cost of the corresponding trajectory $\cost{\plan}$. The optimal plan $\plan^*$ minimizes $\cost{\plan^*}$.
    %Plan $\plan$ is \textit{preference-optimal} iff it produces a preference optimal trajectory.
%Using $\phi$, the user can define a desired order of satisfaction, compactly represented as an ordered set $\Phi = \{\phi_1, ..., \phi_N\}$. The \textit{user preference} is defined over the ideal satisfaction ordering $\preceq$ where $\phi_1 \preceq \phi_2$, $\phi_2 \preceq \phi_3$, etc. If $\pi$ satisfies $\preceq$ then it is said to be \textit{preference-optimal}, denoted as $\pi^*$.

%Section \ref{sec:preferences} informally describes both the TEP problem and the TVP problem. 
%We aim to find a plan that satisfies (TEP) or partially-satisfies (TVP) all tasks in $\Phi$, and is optimal for both the total cost of the trajectory and the user preference.
We aim to find a plan that satisfies $\Phi$ and is optimal for both the total cost of the trajectory and the user preference.
The problem is formally stated as follows.
% statement of the problem is as follows.

\begin{problem}
    \label{problem}
    Given a robot modeled as a WTS, a set of \csltl formulae $\Phi = \{\phi_1, \phi_2, \ldots \phi_N\}$, and a preference function $\mu$, 
    % determine the set of all satisfying plans $\Pi$ such that each $\pi_j \in \Pi$ is an optimal trade-off between cost $\cost{\plan}$ and preference $\mu(\pcst{\plan})$.
    compute a plan $\pi$ for the robot such that the resulting trajectory $\traj^\plan$
    % it minimizes total cost $\Cost{\traj^\pi}$ and preference function $\mu(\pcst{\traj^\pi, \Phi})$, i.e.,
    % \begin{align*}
    %     & \min_{\traj^\pi} (\Cost{\traj^\pi}, \mu(\pcst{\traj^\pi, \Phi})) \\
    %     & \text{ subject to } \quad  \traj^\pi \models \Phi
    % \end{align*}
    % \label{problem}
    \begin{itemize}
        \item minimizes total cost $\Cost{\traj^\pi}$,
        \item minimizes preference function $\mu(\pcst{\traj^\pi, \Phi})$, and
        \item and satisfies all the tasks, i.e., $\pi \models \Phi$. %(TEP) or $\pi \partialmodels \Phi$ (TVP)
    \end{itemize}
\end{problem}
\noindent

\begin{remark}
    If $\Phi$ is not fully achievable, the third objective in Problem~\ref{problem} only requires that $\plan$ \textit{partially} satisfy $\Phi$.
\end{remark}

%\ml{challenges: large search space... pareto set}
%Note that, there may not exist a plan that globally optimizes both the cost and preference objectives since they are often competing, i.e., the relaxation of cost might allow for more preferred behavior, and visa versa. In this work, we seek to compute all the Pareto-optimal plans, i.e., plans that optimize the trade-off between the objectives.

Note that a plan that globally optimizes the cost and preference objectives may not exist because these objectives are often competing.
% Note that, there may not exist a plan that globally optimizes both the cost and preference objectives since they are often competing, 
% In these situations, relaxing the global optimality requirement might allow for more preferred behavior.
% i.e., the relaxation of cost might allow for more preferred behavior, and visa versa. 
Thus, we seek to compute all the Pareto-optimal plans
that optimize the trade-off between cost and preference.
Further, note that Problem \ref{problem} is particularly challenging because the size of the search space grows exponentially with more complex formulae. Both single plan synthesis as well as the Pareto front computation may become intractable when solving large problems without a guiding heuristic.
\section{Approach}
\label{sec: approach}

We now present our approach to solving Problem~\ref{problem}. We begin by reducing Problem~\ref{problem} to graph-search.
% \jk{ehh. it works for now}
Then, we augment $A^*$, a shortest-path graph search algorithm, to calculate a cost-optimal plan that adheres to a user-defined preference cost constraint. 
To deal with the natural intractability of Problem~\ref{problem}, we introduce a problem-agnostic heuristic that greatly improves the runtime of the algorithm. 

\subsection{Product Graph Construction}

% A product graph is a graphical representation of the search space, containing all possible ways to satisfy each task simultaneously.

% Synthesizing plans using only a transition system fails to capture relevant progress towards the completion of each task. We leverage automata theory to graphically represent intermediate progress towards satisfying each task.
%it is possible to construct a graph where each state is representative of not only a corresponding state in $T$, but all relevant progress towards the satisfaction of each task. Graphically representing intermediate progress towards the satisfaction of \csltl formulae can be done using automata theory.

%From a \csltl formula, a Deterministic Finite Automaton (DFA) can be constructed that precisely captures that same set of traces that satisfy the formula \cite{kupferman2001model}.\jk{this is a bit awkward}
From a \csltl formula $\phi_i$, a Deterministic Finite Automaton (DFA) $\dfa_i$ can be constructed that accepts precisely the same set of traces that satisfy $\phi_i$ \cite{kupferman2001model}.
\begin{definition}[DFA]
    A Deterministic Finite Automaton (DFA) is a tuple $\dfa = (Q, q_0, \Sigma, \delta_A, F)$ where: 
    % \begin{itemize}
    %     \item $Q$ is a set of states,
    %     \item $q_0 \in Q$ is an initial state,
    %     \item $\Sigma = 2^{AP}$ is the alphabet,
    %     \item $\delta_A : S \times \Sigma \mapsto S$ is a transition function, and
    %     \item $F$ is a set of accepting (final) states.
    % \end{itemize}
    $Q$ is a set of states,
    $q_0 \in Q$ is an initial state,
    $\Sigma \subset 2^{AP}$ is the alphabet,
    $\delta_A : S \times \Sigma \mapsto S$ is a transition function, and
    $F$ is a set of accepting (final) states.
\end{definition}

\noindent
Trace $\sigma \in (2^{AP})^*$ induces a run $\gamma = q_0 \dots q_m$, where $q_{k+1} = \delta_A(q_k,\sigma_k) \in Q$, on DFA $\dfa_i$.  Run $\gamma$ is accepting if $q_m \in F$.  Then trace $\sigma$ is accepted by $\dfa_i$ and hence satisfies $\phi_i$, i.e., $\sigma \models \phi_i$. 

%\pa{For the TEP problem, a trajectory's PCS is constructed from costs inherited from $T$ using Eq. \ref{eq:cost-phi}, and thus a DFA suffices for checking if a task has been satisfied. However, the TVP problem requires a more elaborate automaton to capture violation costs, and later used in the construction of a trajectory's PCS using Eq. \ref{eq:cost-phi-ps}.
%Given a DFA, a Weighted Deterministic Finite Automaton (WDFA) can be constructed using the observation cost function $\eta$.
%\begin{definition}[WDFA]
%    A Weighted Deterministic Finite Automaton (WDFA) is a tuple $\dfa_{PS} = (Q, q_0, \Sigma_{PS}, \delta_{PS}, c_{PS}, F)$ where: 
%    $Q$ is the same set of states in the DFA,
%    $q_0 \in Q$ is the same initial state,
%    $\Sigma_{PS} \subset 2^{AP}$ is the expanded alphabet,
%    $\delta_{PS} : S \times \Sigma \mapsto S$ is a transition function where $\delta_{PS}(s, \trace) = s'$ iff $\delta_A(s, \trace) = s'$ or $\exists \trace' \st \delta_{PS}(s, \trace') = s' $ and $ \eta(\trace, \trace') < \infty$, 
%    $c_{PS} : S \times \Sigma \mapsto \mathbb{R}_{\geq 0}$ is a violation-cost function that weights each transition $\delta_{PS}(s, \trace') = s'$ with a cost $\eta(\trace, \trace')$ where $\delta_A(s, \trace) = s'$, and 
%    $F$ is the same set of accepting states.
%\end{definition}
%The trace $\sigma \in (2^{AP})^*$ again induces a run $\gamma = q_0 \dots q_m$, where $q_{k+1} = \delta_{PS}(q_k,\sigma_k) \in Q$, on WDFA $\dfa_i$, now incurring a violation cost $\Cost(\sigma) = \sum_{k=0}^{m-1} c_{PS}(q_k, \sigma_k)$.
%}
A property of \csltl is accepting states $q_f\in F$ can be made absorbing, i.e, $\delta_A(q_f, \sigma) = q_f$ $\forall \sigma \in 2^{AP}$. 
% An \textit{accepting run} on DFA $\dfa_i$ is a sequence of states $\gamma = q_0 \dots q_m$ where $q_k \in Q$ $\forall k\in \{0, \ldots, m\}$. 
% \jk{in the problem statement we use inequality notation $0 \leq k \leq m$. Pick a notation style and change accordingly. No need to use both.} 
%\ml{define runs and accepting runs... say the DFA for $\phi_i$ is denoted by $\dfa_i$. Also say that in the worst case, the size of $\dfa_i$ is doubly exponential in the size of $\phi_i$}
% Every \csltl formula can be represented by a DFA \cite{kupferman2001model}
% with a single accepting state $q_f\in F$ that must satisfy $\delta_A(q_f, \sigma) = q_f$ $\forall \sigma \in 2^{AP}$. 
The construction of DFA is worst-case doubly-exponential in the size of the task \cite{kupferman2001model}.

\begin{remark}
    In the partial satisfaction setting, a DFA for $\phi_i$ is augmented with extra transitions and transition weights, which represent allowable violations to task $\phi_i$ and the ``violation costs,'' respectively \cite{Lahijanian:aaai:2015}.  The obtained structure is a Weighted DFA (WDFA).
    % , where transitions between states now incur a ``violation-cost''.
\end{remark}

Since we are interested in capturing progress towards each task simultaneously for a given trajectory on $WTS$ $T$, a product automaton $\paut$ can be defined, where each state combines the physical attributes from $T$ with temporal attributes from all $\phi_i \in \Phi$. %\pa{Each $\phi_i$ is converted into the respective DFA (or WDFA for solving the TVP problem).}

\begin{definition}[Product Automaton] \label{def:paut}
  Given a WTS $T$ and DFAs $\dfa_1, \dfa_2, \ldots \dfa_N$, a Product Automaton $\paut = T \otimes \dfa_1 \otimes ... \otimes \dfa_N$ is
%  a weighted DFA that can be 
   a tuple $\paut= (P, p_0, A, \Sigma, \delta_P, C_P, acc)$, where 
   \begin{itemize}
       \item $P = S \times Q_1 \times ... \times Q_N$ is a set of states,
       \item $p_0 = (s_0, q_{1, 0}, q_{2, 0}, ..., q_{N, 0})$ is the initial state,
      \item $A$ is the same set of actions as in $T$,
      \item $\Sigma = 2^{AP}$ is the alphabet,
      \item $\delta_P : P \times A \mapsto P$ is a transition function where for any two states $p = (s, q_1, ..., q_N)$ and $p' = (s', q_1', ..., q_N')$, the transition $p' = \delta_P(p, a)$ exists if $s' = \delta_T(s, a)$ 
      and $q_i'= \delta_A(q_i, \mathcal{L}(s'))$ %\pa{($\delta_{PS}$ for TVP problem)} 
      for all $1\leq i\leq N$,
       \item $C_P : P \times A \times \{\Phi\} \mapsto \mathbb{R}_{\geq 0}^N$ is a function that weights each transition with a transition-PCS, and %where $c_P(p, a) = c(s, a)$ \pa{generalize this to $c_p : P \times A \mapsto D_1 \times D_2$}, and
       \item $acc : P \mapsto 2^\Phi$ is an acceptance function that identifies which tasks are satisfied in each state, i.e. $\acc{(s, q_1, ..., q_N)} = \{\phi_i \mid q_i \in F_i \;\; \forall 1 \leq i \leq N \}$. 
   \end{itemize}
\end{definition}
\noindent
A plan $\pi$ generates a trajectory on $\paut$, denoted by $\rho = \ptraj_0 \ldots \ptraj_m$ where $\ptraj_0 = p_0$, $p_k\in P$ $\forall 0\leq k \leq m$, and 
$\ptraj_{k+1} = \delta_P(\ptraj_k, \plan_k)$.
To define $C_P$, we first define the transition-cost for each task as $c_P : P \times A \times \Phi \mapsto \mathbb{R}_{\geq 0}^N$ where, given a state $p = (s, q_0, q_1, ..., q_n)$ and action $a$, $c_P(p, a, \phi_i) = c(s, a)$. 
%For the TEP problem, the task-wise transition-cost from , is inherited from the model, i. e.,  for all $\phi_i \in \Phi$. For the TVP problem, the task-wise transition cost is inherited from each automaton, i. e. $c_P(p, a, \phi_i) = c_{PS}(q_i, L(\delta_T(s, q)))$. 
Since state $p\in P$ captures all of the relevant task history of any trajectory that reaches $p$, the construction of a PCS $\pcs(\ptraj, \Phi)$ can be alternatively expressed using the element-wise sum of  individual transition-PCS's. Given state-action pair $(p, a)$, $C_P$ constructs a transition-PCS defined as a tuple $C_P(p, a) = (\tilde{c}_{P,1}, \ldots, \tilde{c}_{P,N})$, where %each element $\tilde{c}_{P,i}$ inherits a transition cost if $acc(\ptraj_k) \not\in \Phi$, and zero otherwise.

\begin{equation} \label{eq:transition-inheritance}
    \tilde{c}_{P,i} = 
    \begin{cases}
    c_P(p, a, \phi_i) &\text{if }  \phi_i \not\in \acc{p} \\
    0 &\text{otherwise}
    \end{cases}
\end{equation}
It follows form Def. \ref{def:pcs}, $\pcs(\ptraj,\Phi) = \sum_{k=0}^m C_P(\ptraj_k, a_k, \Phi)$. 

\begin{remark}
In the partial satisfaction setting, the product automaton is constructed from WDFAs instead of DFAs. Additionally, the transition cost for each task $c_P(p,a,\phi_i)$ is equal to the transition cost in the respective WDFA $\dfa_i$. 
\end{remark}

% \begin{equation}
    % \ptraj_{k+1} = \delta_P(\ptraj_k, \plan_k).
    % \label{eq:getplan}
% \end{equation}
% is a sequence of transitions on $\paut$. 

In traditional product construction, state $p = (s,q_0,\ldots,q_n)$ is labeled accepting only if $q_i \in F_i$ for all $1\leq i\leq n$, which means all the tasks are satisfied at $p$.
% Conjunctive acceptance of $p$ for traditional product automata is represented by a set of accepting states $F$ containing states that satisfy all DFAs. 
However, in our framework, as seen in \eqref{eq:transition-inheritance}, finer acceptance granularity is needed to correctly capture the PCS of a trajectory. Hence, we use an indicator function $\acc{p}$ that returns the subset of tasks that are satisfied at state $p$.
% \jk{$\rho$ is a trajectory from above. Either change the language or add a subscript to make is a state}.
A satisfying trajectory $\ptraj_{acc}$ on $\paut$ is one that emits $\acc{\pacc} = \Phi$.
%This process is referred to as $extractPlan(p_m)$, seen on line \ref{alg1:extractplan} of algorithm \ref{alg:astar} and line \ref{alg2:extractplan} of algorithm \ref{alg:pf_comp}
%\ml{can't refer to algorithms that you haven't introduced... just say, we use this procedure in our planning algorithms.}

With this construction, determining the cost-optimal plan is reduced to graph search. Specifically, we desire a satisfying trajectory $\ptraj_{acc}$ that minimizes \eqref{eq:total cost}.
The cost-optimal satisfying plan $\plan^*$ can be determined using $\delta_P$ for the shortest path $\ptraj^*$ between node $p_0$ and $\pacc$ where $\acc{\pacc} = \Phi$. Note that since $\delta_P$ allows for multiple transitions between the same two states $p' = \delta_P(p, a)$ and $p' = \delta_P(p, a')$ under different actions $a \neq a'$, we choose the action with the smallest edge weight, i.e., 
action $a^*$ is selected if $c_P(p,a^*) \leq c_P(p, a)$.
% for all transitions $p' = \delta_P(p, a)$.
%Representing this product automaton symbolically can reduce the memory consumption and run-time of non-exhaustive search algorithms performed on this graph. 
%\begin{equation}
%    P = T \otimes A_1 \otimes ... \otimes A_N
%\end{equation}
%We now move to presenting our efficient search algorithm for determining $\rho_{acc}$  on $\paut$.
\subsection{Constrained Symbolic Search}

\removelatexerror
\begin{algorithm}[t]
\caption{$A^*$ Constrained Synthesis}\label{alg:astar}
\KwIn{$\paut = (P, p_0, A, \Sigma, \delta_P, c_P, acc)$, $h$, $\mu$, $\mu_{max}$}
\KwOut{$\plan^*$}
Create a new node $x_0$ and add it to set $O$ \\
$(state(x_0), g(x_0), \pcs(x_0), f(x_0)) \gets (p_0, 0, zeros(N), 0)$ \\%\Comment*[r]{Tentative shortest path cost} 
Set $g_{min}$ to $0$ for $p_0$ and $\infty$ for all other states \\
%$g_{min}(p') \gets \infty \forall p' \neq p_0$
%$Q, O \gets \{p_0\}$  \\
$parent(x_0) \gets NULL$ \\
%\For{all $p\in P$}{
%    $alt(p) \gets \emptyset$ \\
%}
\While{$O$ is not empty} {
    Select and remove $x$ from $O$ such that $f(x) \leq f(x') \forall x'\in O$\\
    \If {$acc(state(x)) = \Phi$ and $g^+ \leq g_{min}(state(x))$} {
        \KwRet{$extractPlan(x)$} \\ \label{alg1:extractplan}
    }
    \For{$a\in actions(state(x))$} { \label{alg1:symb}
        %$updated \gets False$ \\
        $g^+ \gets g(x) + cost(p, a)$ \\
        $\pcs^+ \gets \pcs(x) + C_P(p, a)$ \\
        %$(g^+, f^+, \pcs^+) \gets (g(p) + c_P(p, a), g^+ + h(p'), newSet(p,a))$ \label{alg1:newset1} \\
        \uIf{$\mu(\pcs^+) > \mu_{max}$}{\label{alg1:prune2}
            \Continue
            %\eIf{$p' \in O $ and $ g^+ < g(p') \lor \mu(\pcs^*) < \mu(\pcs(p')) $} {
            %\eIf{$p' \in O $} {
            %    %$updated \gets isSubOpt(\pcs(p'), \pcs^+)$ \label{alg1:isSuboptimal}\\
            %    %\If{{
            %    \eIf{$g^+ < g(p') \lor \mu(\pcs^*) < \mu(\pcs(p')) $}{
            %    %\eIf{$g^+ >= g(p') \land \mu(\pcs^*) >= \mu(\pcs(p')) $}{
            %        %$alt(p') \gets alt(p') \cup (p', g^+, f^+, \pcs^+)$ \\
            %        $alt(p') \gets alt(p') \cup p'$ \\
            %    } {
            %        \Continue
            %    }
            %    %$(upd, skip) \gets isSubOpt(\pcs^+, g^+)$ \label{alg1:isSuboptimal}\\
            %    %\If{skip} {
            %    %    \Continue
            %    %}
            %} {
            %    $O\gets O \cup p'$ \\ 
            %} 
            %$(g(p'), f(p'), \pcs(p')) \gets (g^+, f^+, \pcs^+)$\\
            %$par(p') \gets p$ \\
            %$Q \gets Q \cup p'$\\
        } \uElseIf {$g^+ < g_{min}(\delta_P(p, a))$} {
            $g_{min}(\delta_P(p, a)) \gets g^+$ \\
        }
        %$f^+ \gets g^+ + h(\delta_P(p, a))$ \\
        Create a new node $y$ and add it to $O$ \\
        $(state(y), g(y), \pcs(y), f(y)) \gets (\delta_P(p, a), g^+, \pcs^+, g^+ + h(\delta_P(p, a))$ \\%\Comment*[r]{Tentative shortest path cost} 
        $parent(y) \gets x$
        
    }
}
\KwRet{failure}
\end{algorithm}

\label{subsec: symb_search}
For the case that the user does not want to perform the entire trade-off analysis, but rather they are interested in a single satisfying plan that adheres to a preference cost bound $\mu_{max}$, we provide a single-objective constrained synthesis algorithm that returns a cost-optimal plan $\plan^*$ such that $\mu(\pcs(\traj^{\plan^*},\Phi)) \leq \mu_{max}$, given a product automaton $\paut$. The pseudocode for constrained synthesis is presented in Alg.~\ref{alg:astar}.

Alg. \ref{alg:astar} takes in a product automaton $\paut$, an admissible heuristic function $h$, a preference function $\mu$, and a preference-cost constraint $\mu_{max}$. We introduce a method of generating a very efficient admissible heuristic $h$ in Sec. \ref{sec:heuristic}. The output of Alg. \ref{alg:astar} is a cost optimal plan $\plan^*$ that adheres to the preference-cost constraint. 

Similar to $BOA^*$ \cite{ulloa2020simple}, Alg. \ref{alg:astar} searches over ``nodes'', where each node captures a unique path to a given state $p$. %Restricting the search tree to the product automaton states $P$ may exclude a trajectory $\ptraj$
We keep track of four properties of every node $x$: (i) $state(x)$, the corresponding product automaton state $p$, (ii) $g(x)$, the total cost or ``g-score'' of $x$, (iii) $\pcs(x)$, the PCS of the path leading to $x$, and (iv) $f(x) = g(x) + h(x)$, the heuristic ``f-score'' of $x$, which gives an under-estimate of the minimum cost-to-goal using the heuristic function $h$ (described in Sec. \ref{sec:heuristic}. Since $\mu$ is assumed to be monotonically-increasing along a trajectory, the preference cost of any partial-trajectory from $p_0$ to $state(x)$ is guaranteed to lower bound any trajectory through $x$. To guarantee that $\plan^*$ is optimal, we store the minimum cost encountered by any trajectory that reaches $p$, denoted $g_{min}(p)$. Additionally, to extract $\plan^*$, we store each node's parent inside $parent(x)$.

Initially, a new node $x_0$ is created and added to the open set $O$ with $state(x_0) = p_0$, quantities set to zero, and a $NULL$ parent (Lines 1-4). In each iteration, a node $x$ is selected from the open set $O$ with the lowest f-score (Line 6). If the product state $state(x)$ satisfies all tasks and attains the smallest g-score of any path to $x$, a solution is found, and the path can be extracted by recursively looking-up $parent(p)$ (Lines 7-8).
Otherwise, for each action enabling a transition on $\paut$ from state $state(x)$, it computes the tentative cost (represented by $cost(p, a) = c(s, a)$) and tentative PCS by adding the transition-PCS (Lines 9-11). If the preference-cost of the tentative trajectory exceeds the constraint, the tentative path is not allowed and can be disregarded (Line 12). Otherwise, the tentative path is a valid search candidate. If the tentative path yields a lower cost, the lower cost is marked using $g_{min}$ (Lines 14-15). Then, algorithm creates a new node $y$ representing the tentative path, assigns the tentative quantities, and marks the parent node $x$ (Lines 16-18). If the open set is ever empty, a plan that satisfies all task and adheres to the preference constraint does not exist, thus returning \textit{failure} (Line 19).

Alg. \ref{alg:astar} is guaranteed to return a cost-optimal plan $\plan^*$ that satisfies the preference-cost constraint $\mu(\pcs(\traj^{\plan^*})) \leq \mu_{max}$ if the heuristic function $h(p)$ is \textit{admissible} (see Sec. \ref{sec:heuristic}).

\subsection{Pareto Front Computation}

% Now that a single satisfying plan can be synthesized, 
We are often interested in the Parent front, i.e., the
set of all optimal trade-off solutions. To efficiently  
% a bi-objective search 
compute such trade-offs
between cost and preference-cost, we formulate our problem as a multi-objective graph search problem, and leverage $BOA^*$ \cite{ulloa2020simple} to compute the Pareto front.

To use $BOA^*$, we first formulate our trade-off analysis synthesis problem as a simple bi-objective heuristic graph search problem where Objective 1 is cost and Objective 2 is preference-cost. A search problem can be defined using the tuple $(P, E, \mathbf{c}, p_0, \pacc, \mathbf{h})$ 
where $P$ is the set of states, $E \subseteq P \times P$ is a set of state transitions captured by the operator $neighbor$, $\mathbf{c} : E \mapsto \mathbb{R}^2_{\geq 0}$ is a cost-double edge weight function for two competing objectives, and $\textbf{h} : P \mapsto \mathbb{R}^2_{\geq 0}$ is a heuristic function (different from $h$ above). 
Similar to Alg. \ref{alg:astar}, Each node ($x$) in the search queue can be represented by the state $p$, cost-vector $\mathbf{g} = (g_1, g_2)$, and $f$-cost-double $\mathbf{f} = (f_1, f_2)$. Adapting to our problem, $g_1$ is the cost (denoted $g(x)$ in Alg. \ref{alg:astar}), and $g_2 = \mu(\pcs(x))$. For the purposes of this paper, we present a heuristic for only the cost objective, thus leaving $f_1 = f(x)$ (defined in Alg. \ref{alg:astar}) and $f_2 = g_2$.

\begin{figure*}[t]
    \centering
    \begin{subfigure}[t]{0.24\textwidth}
        \centering
        \includegraphics[trim={0 10mm 0 10mm},clip,width=1.1\linewidth]{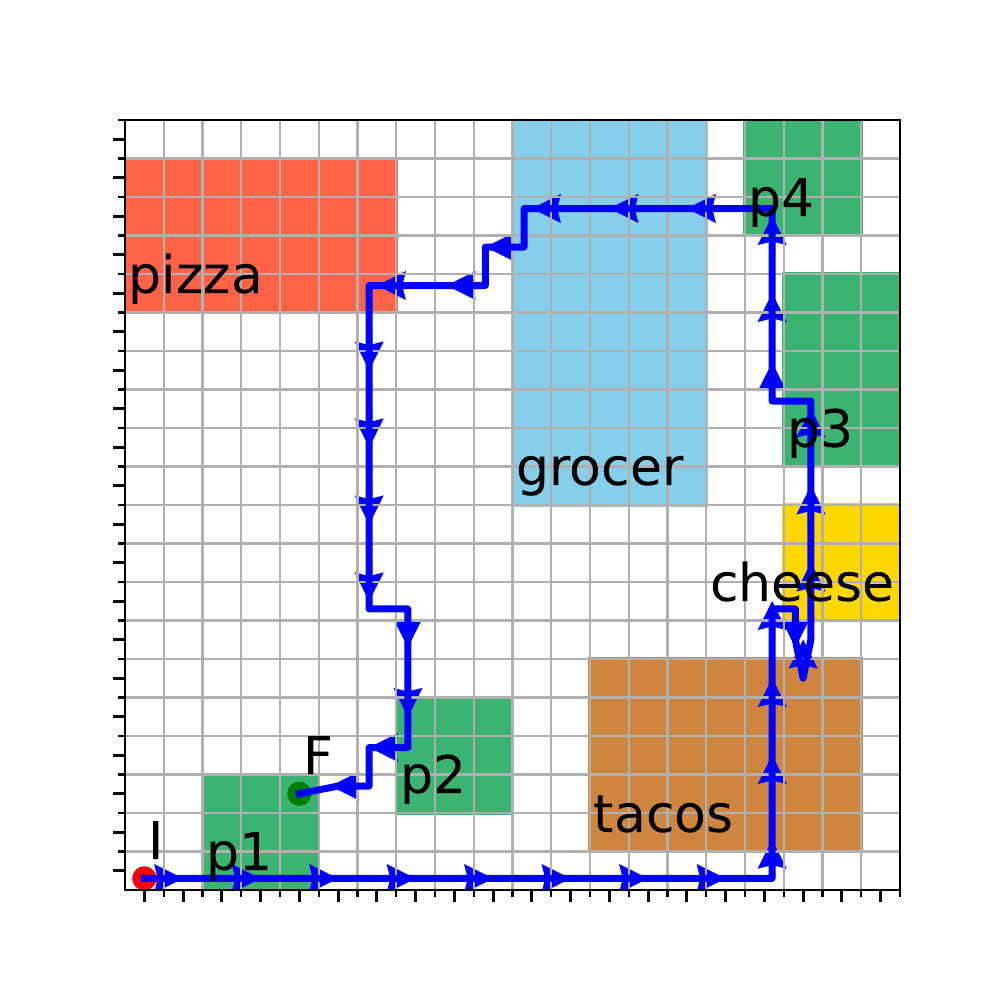}
        \caption{$\text{cost}=68$, $\mu=29$, }
    \label{fig: ac_1}
    \end{subfigure}%
    %\hfill
    \begin{subfigure}[t]{0.24\textwidth}
        \centering
        \includegraphics[trim={0 10mm 0 10mm},clip,width=1.1\linewidth]{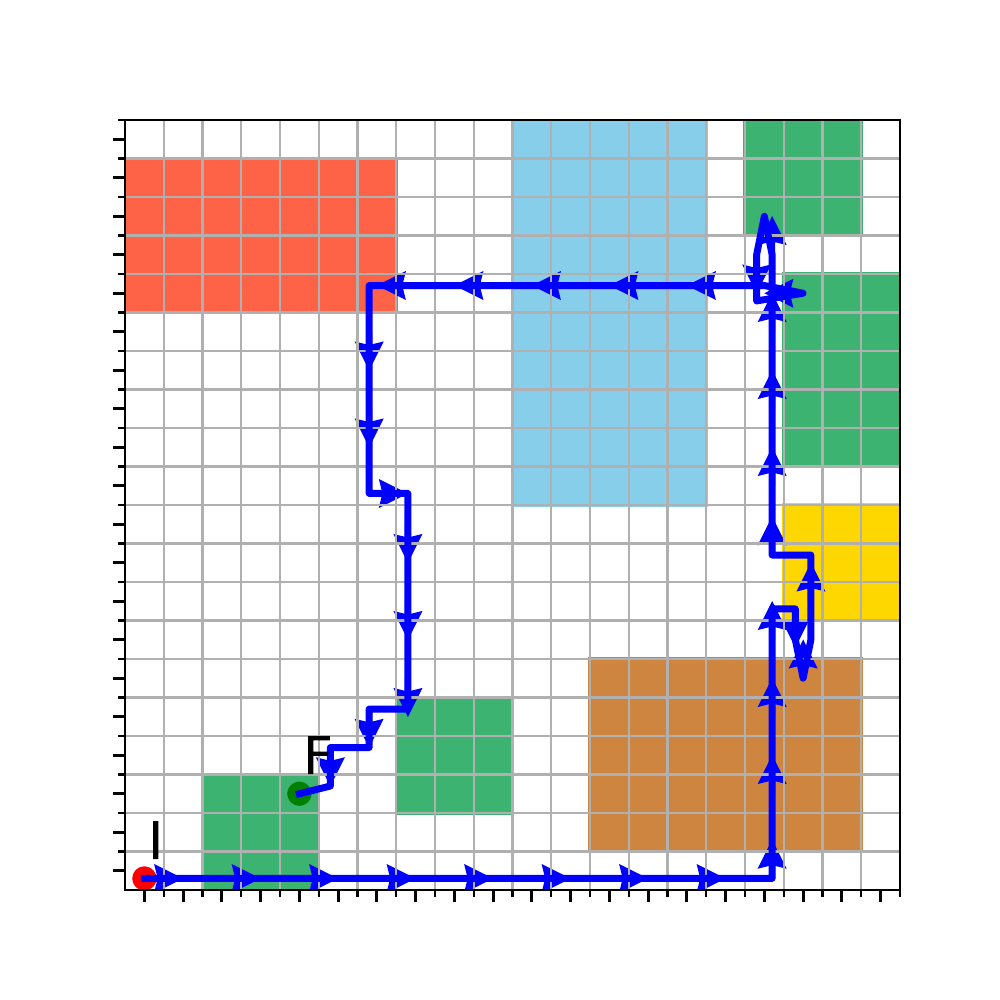}
        \caption{$\text{cost}=70$, $\mu=28$}
    \label{fig: ac_2}
    \end{subfigure}%
    %\hfill
    \begin{subfigure}[t]{0.24\textwidth}
        \centering
        \includegraphics[trim={0 10mm 0 10mm},clip,width=1.1\linewidth]{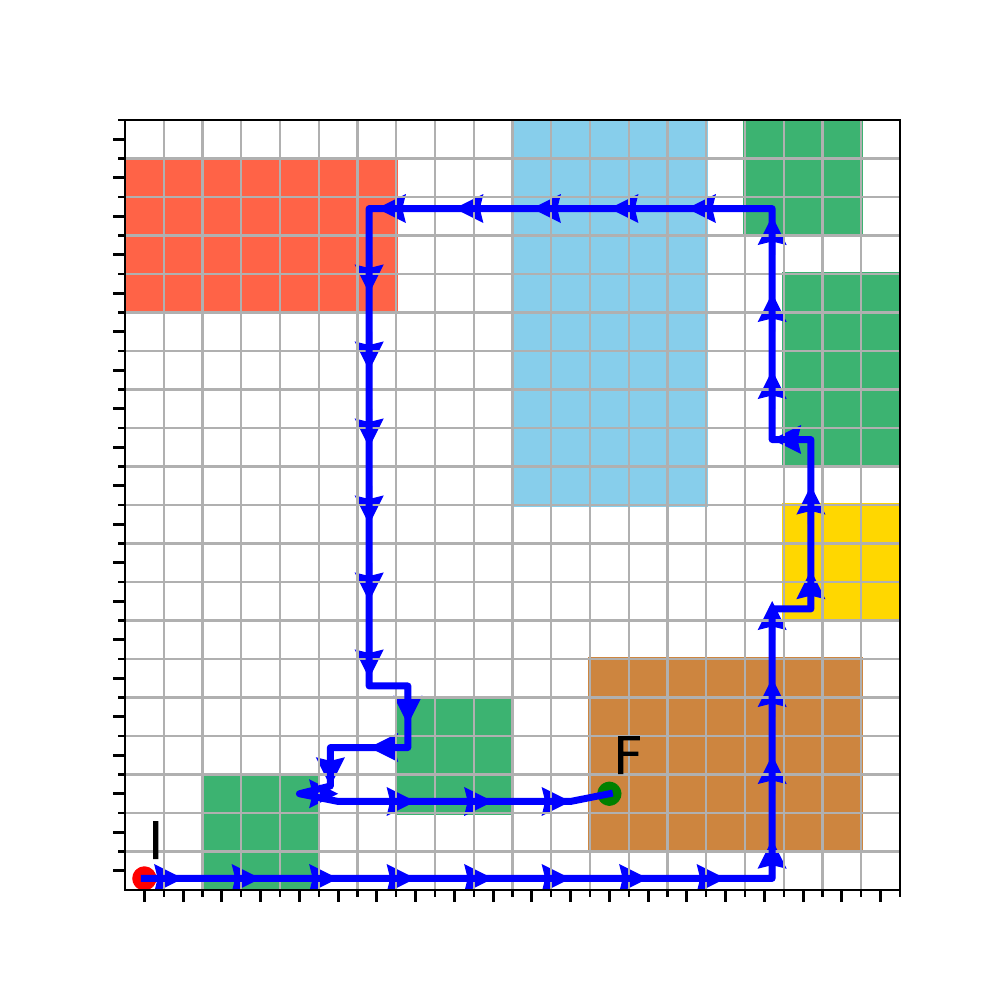}
        \caption{$\text{cost}=72$, $\mu=24$}
    \label{fig: ac_3}
    \end{subfigure}%
    %\hfill
    \begin{subfigure}[t]{0.24\textwidth}
        \centering
        \includegraphics[trim={0 10mm 0 10mm},clip,width=1.1\linewidth]{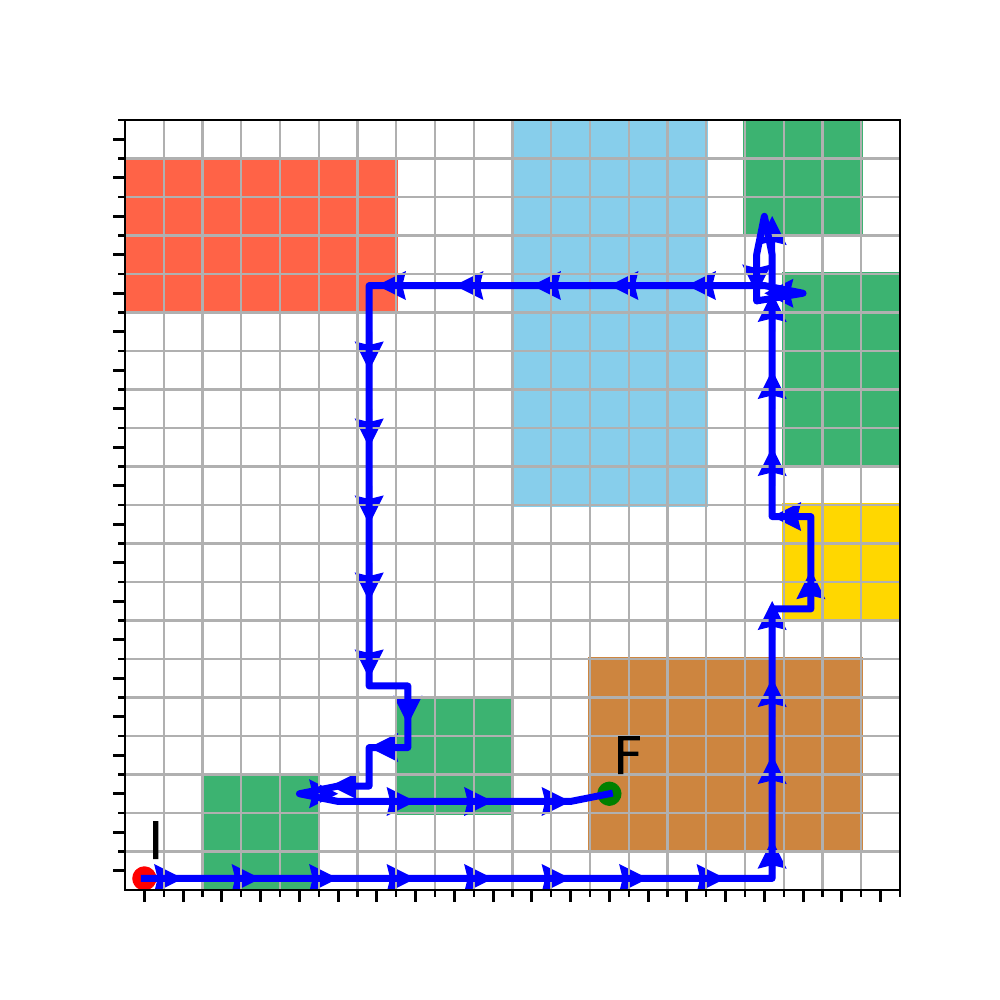}
        \caption{$\text{cost}=74$, $\mu=23$}
    \label{fig: ac_4}
    \end{subfigure}%
    \newline
    \begin{subfigure}[t]{0.24\textwidth}
        \centering
        \includegraphics[trim={0 10mm 0 10mm},clip,width=1.1\linewidth]{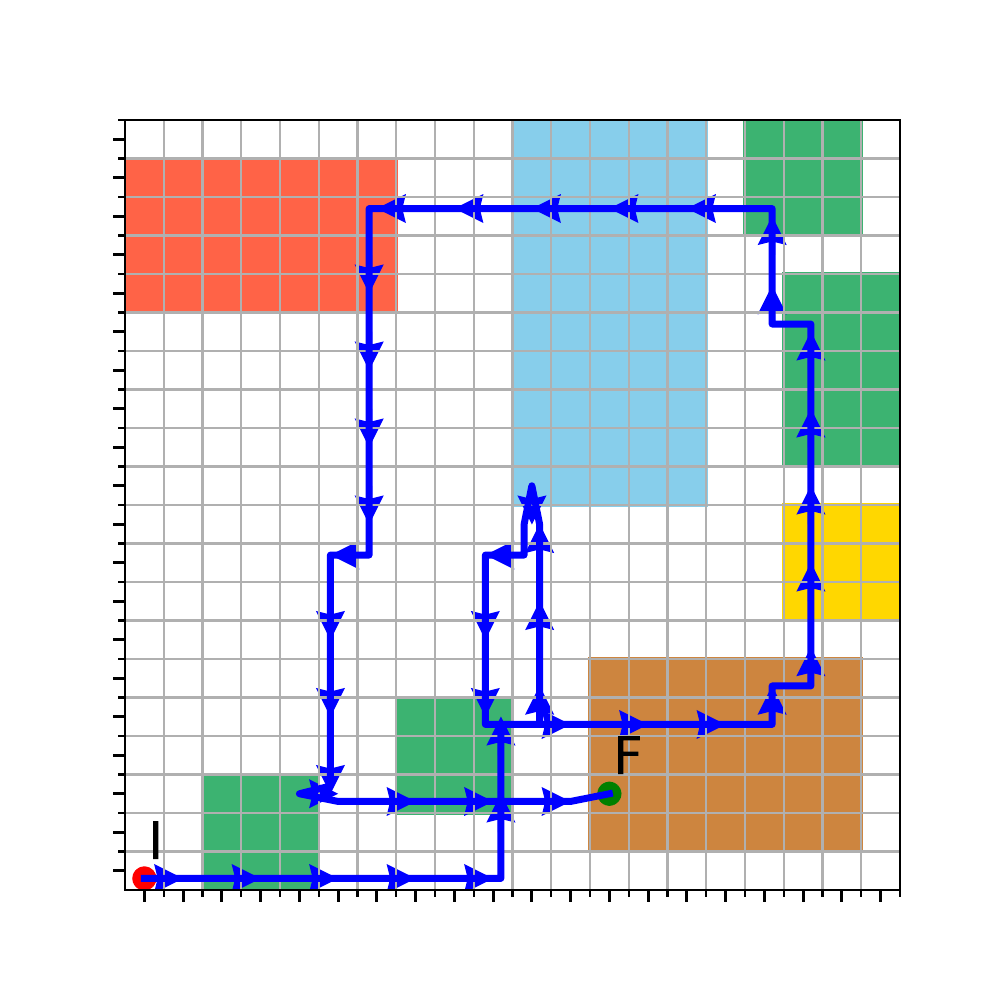}
        \caption{$\text{cost}=84$, $\mu=11$}
    \label{fig: ac_5}
    \end{subfigure}%
    \begin{subfigure}[t]{0.24\textwidth}
        \centering
        \includegraphics[trim={0 10mm 0 10mm},clip,width=1.1\linewidth]{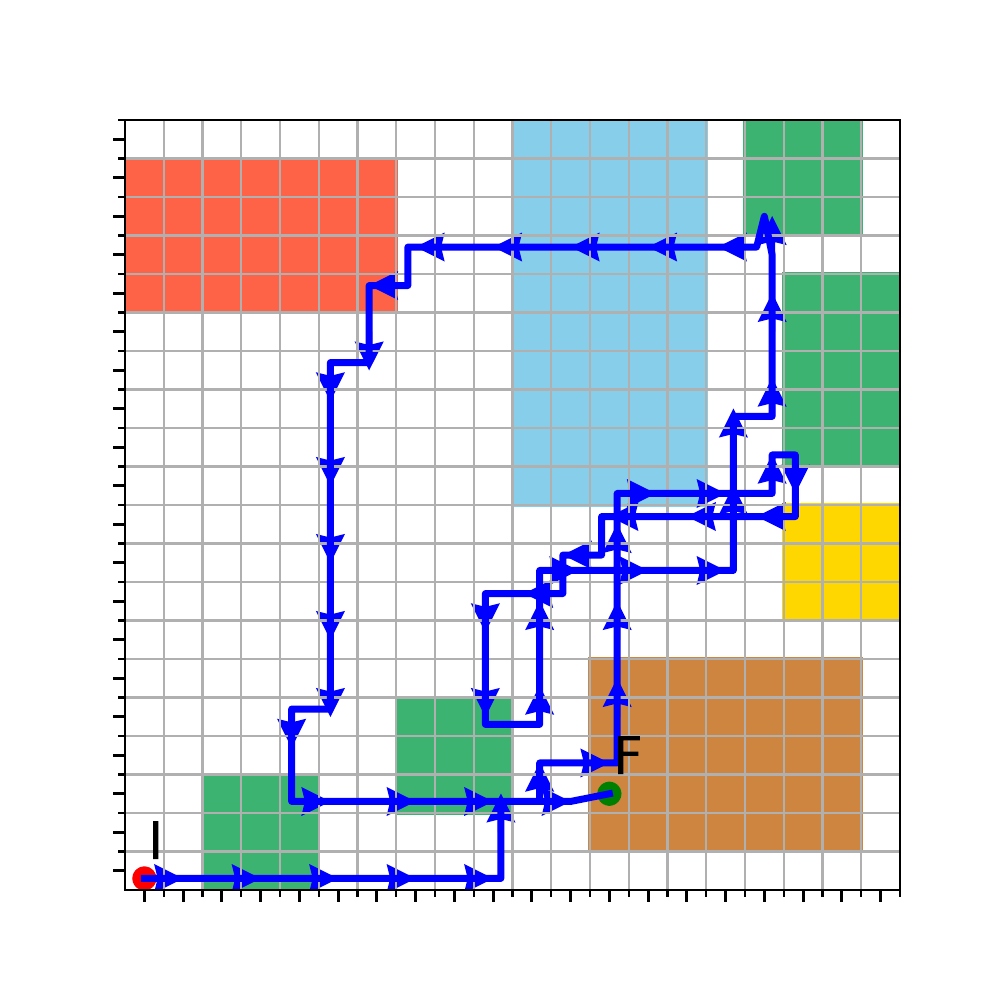}
        \caption{$\text{cost}=98$, $\mu=0$}
    \label{fig: ac_6}
    \end{subfigure}%
    \begin{subfigure}[t]{0.24\textwidth}
        \centering
        \includegraphics[trim={0 10mm 0 10mm},clip,width=1.1\linewidth]{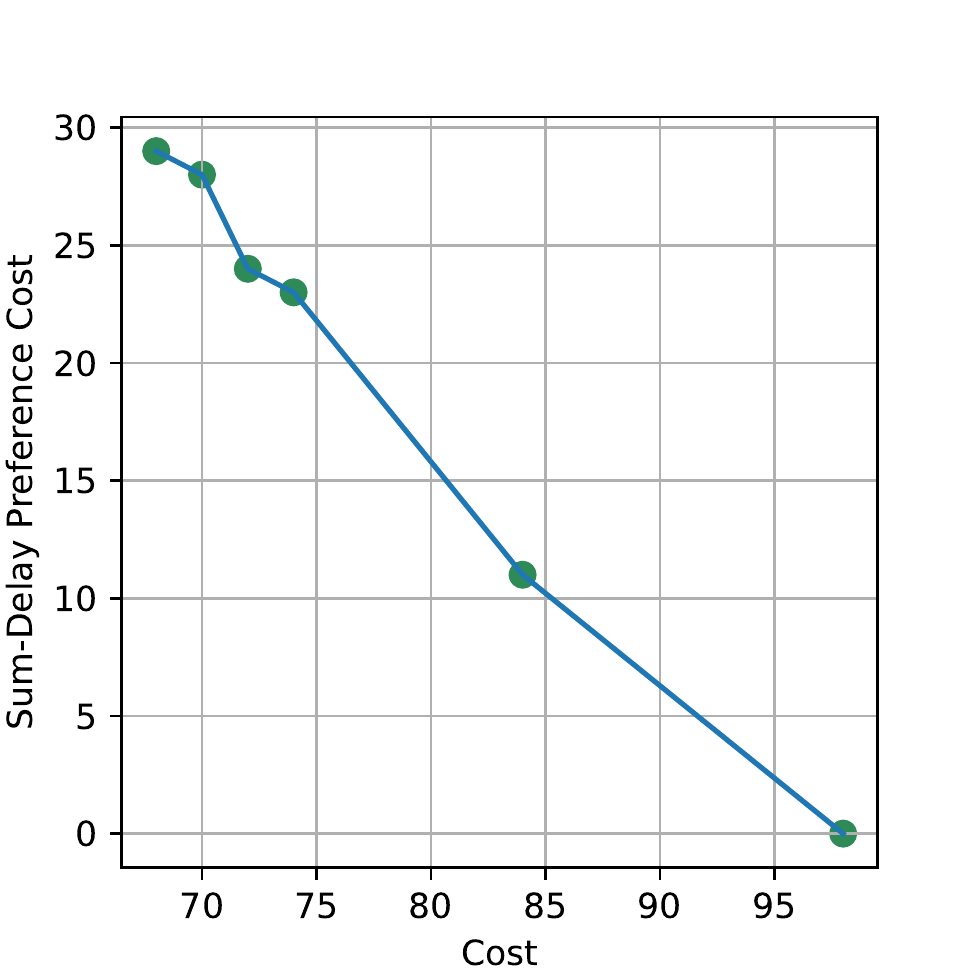}
        \caption{Pareto Front}
    \label{fig: ac_pf}
    \end{subfigure}%
    %\hfill
    %\newline
    %\begin{subfigure}[t]{0.20\textwidth}
    %    \centering
    %    \includegraphics[width=1.0\linewidth]{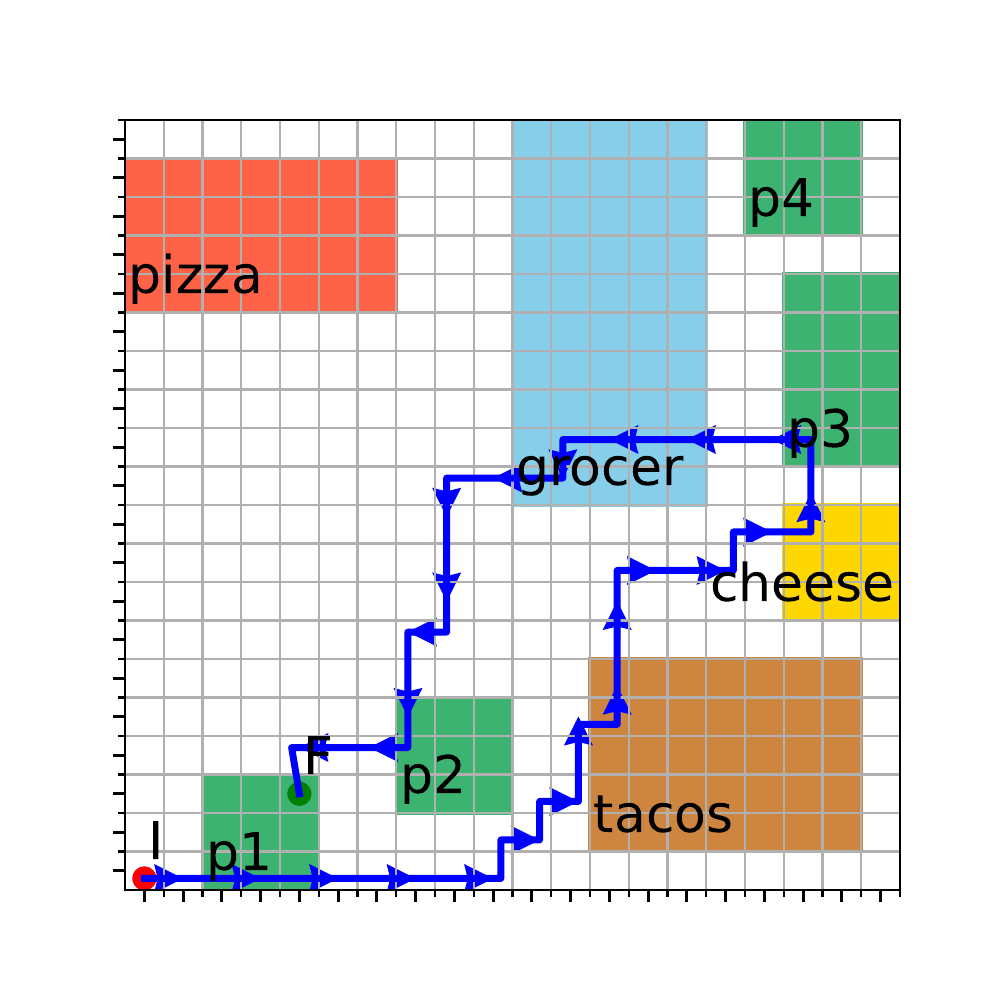}
    %    \caption{$\text{cost}=50$, $\mu=75$}
    %\label{fig: adhesion_demo_7}
    %\end{subfigure}%
    %%\hfill 
    %\newline
    %\begin{subfigure}[t]{0.20\textwidth}
    %    \centering
    %    \includegraphics[width=1.07\linewidth]{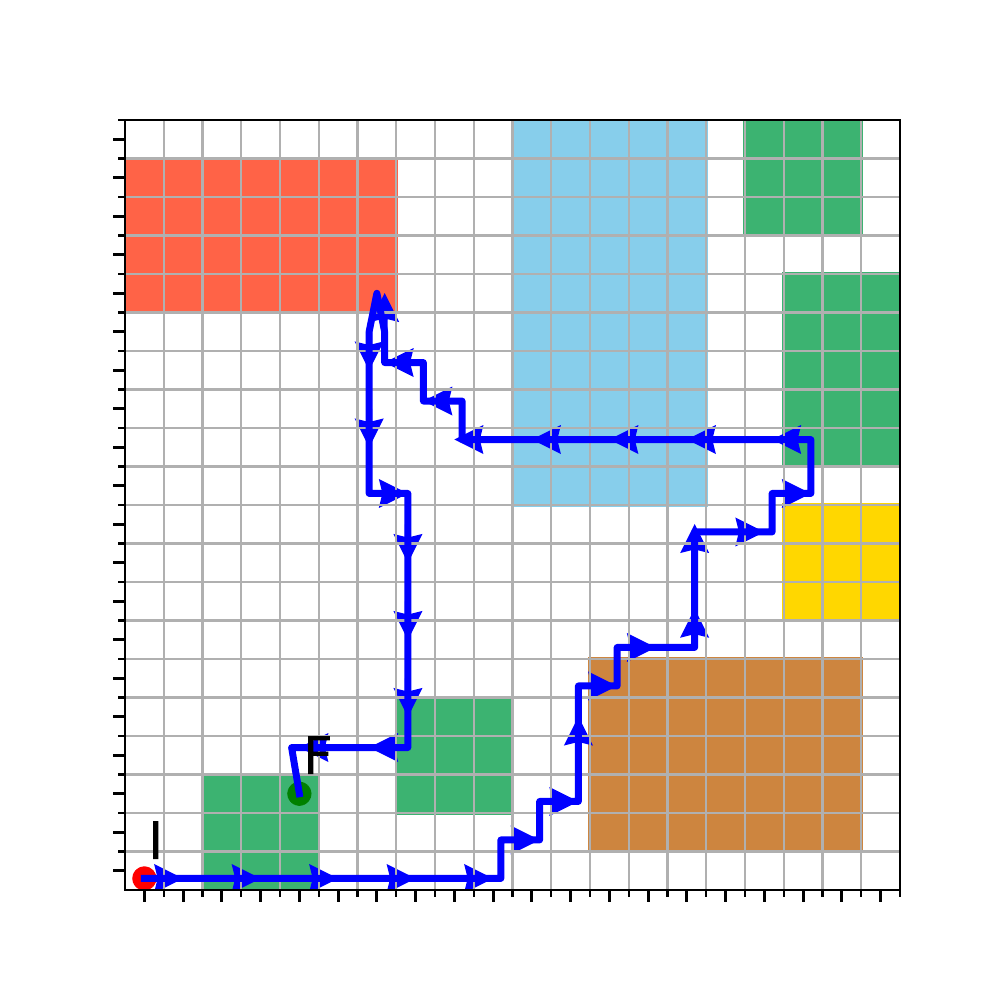}
    %    \caption{$\text{cost}=60$, $\mu=30$}
    %\label{fig: adhesion_demo_8}
    %\end{subfigure}%
    %%\hfill
    %\begin{subfigure}[t]{0.20\textwidth}
    %    \centering
    %     \includegraphics[width=0.95\linewidth]{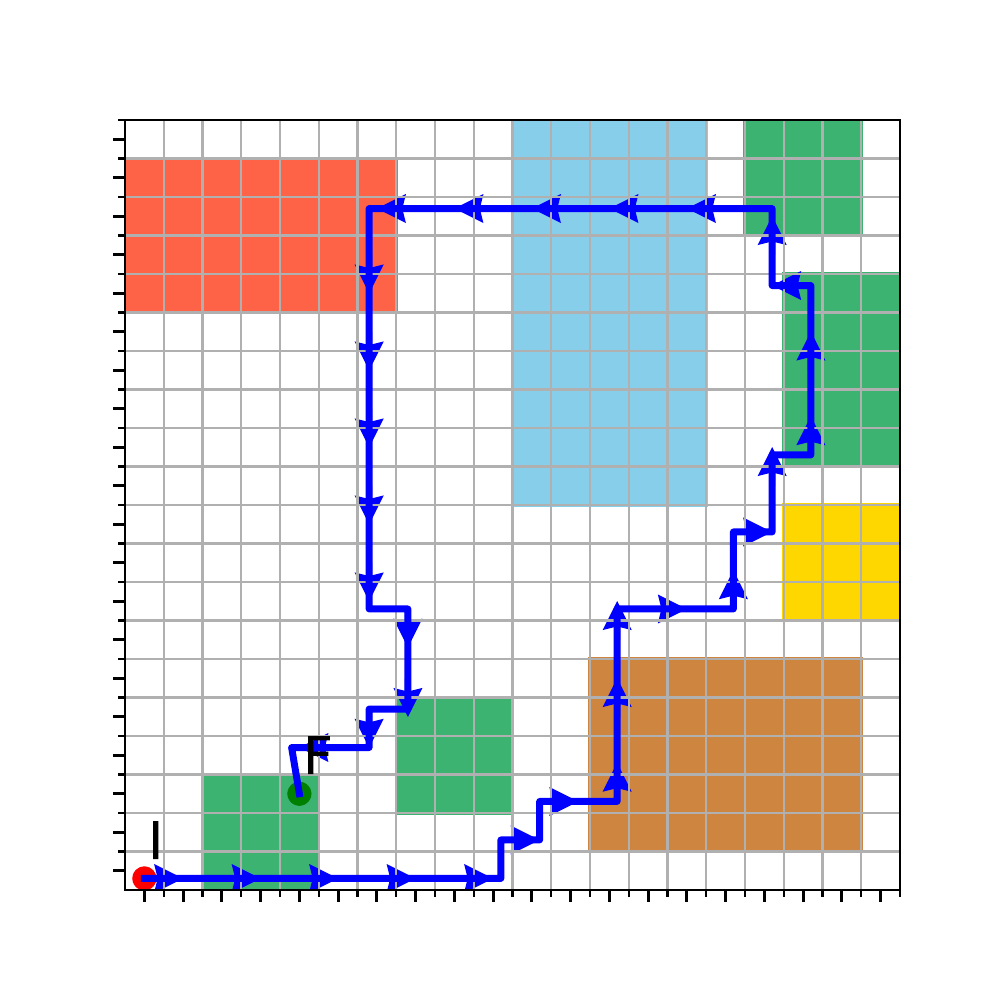}
    %    \caption{$\text{cost}=64$, $\mu=10$}
    %    %\caption{Scenario 1}
    %\label{fig: manipulator1}
    %\end{subfigure}%
    %\hfill
    %\begin{subfigure}[t]{0.20\textwidth}
    %    \centering
    %     \includegraphics[width=0.95\linewidth]{figures/case_studies/scenario_2_cropped.jpg}
    %    \caption{Scenario 2}
    %\label{fig: manipulator2}
    %\end{subfigure}%
    
%\caption{Results visualization for Case Study 1 and 3. Figures (a)-(h) show varying preference adhesion for a mobile robot. Figures (i)-(j) show the manipulator setup (video link \cite{videos}).}
\caption{Case Study 1 - Varying levels of order-of-satisfaction preference adhesion are shown for a mobile robot where (a) is the most cost-efficient plan, and (f) satisfies the tasks in the desired order. (g) is the Pareto front.}
\label{fig: ac}
\vspace{-4mm}
\end{figure*}

%Given a node $p$ representing trajectory the tentative path $\ptraj = \ptraj_0 \ldots \ptraj_k$ and an extended node $p' = \ptraj_{k+1}$, we can define the path-cost propagation $(g_1', g_2') = \mathbf{g} + \mathbf{c}(p, p')$ through:
%\begin{itemize}
%    \item $g_1 = \Cost{\ptraj} + c_P(\ptraj_k, a)$ such that $p' = \delta_P(\ptraj_k, a)$, and
%    \item $g_2 = \mu(\pcs(\ptraj_0 \ldots \ptraj_{k+1}))$
%\end{itemize}
%This path-cost propagation operation is well defined and preserves the optimality of $BOA^*$ since $\mu$ must be monotonically increasing along a trajectory. Since our problem only utilizes a cost-heuristic $h(p)$, the heuristic function $(h_1,h_2) = \mathbf{h}(p)$ can be defined simply through $h_1 = h(p)$ and $h_2 = 0$.
Tuple $(P, E, \mathbf{c}, p_0, \pacc, \mathbf{h})$ %, along with the operations defined above, 
is used as an input to $BOA^*$ to compute all Pareto-optimal solutions with varying preference adherence.
%As with any $A^*$ algorithm, the optimality guarantees assume an \textit{admissible} heuristic function. 
Using an \textit{admissible} heuristic function, $BOA^*$ is guaranteed to find all optimal solutions if they exist.
\subsection{$A^*$ Symbolic Search}

Recall that for large formulas (complex task specifications), 
the size of each DFA $A_i$ might become very large, making the size of $\paut$ overwhelmingly large.
Hence, it is necessary to use a non-exhaustive search algorithm to find $\ptraj^*$.   
% may not search the entirety of $\paut$, thus, 
The first step towards increasing the computational efficiency is to represent $\paut$ implicitly.
This can done by embedding $\paut$ into the search algorithm and using the $neighbors$ operator.
% , where $p' \in Post(p)$ iff there exists an action $a^*$ such that $p' = \delta_P(p, a^*)$. 
All nodes recursively encountered using $neighbors$ can be saved, and the others are ignored.

Simply implicitly representing $\paut$ might improve the average-case total runtime and memory consumption; however, we can leverage the efficiency of the $A^*$ search algorithm to improve search speed and reduce the number of iterations needed to find the optimal plan. $A^*$ requires an \textit{admissible heuristic}, i.e., an underestimate of the cost
from a given node $p$ to the nearest $\pacc$,
% \jk{once again, is $\pacc$ a state or trajectory?}
to speed up the search while maintaining the optimality guarantee. Many applications of $A^*$ make use of external problem-specific information as a heuristic, such as Cartesian distance to goal on a map. To develop an approach that remains problem-agnostic, we use a novel method of pre-computing heuristic values using small-scale decentralized synthesis techniques. This approach makes no further assumptions on Problem
\ref{problem}, and provides significant runtime improvement (see benchmark results in Sec.~\ref{sec: results}).
%Using the $A^*$ search algorithm while symbolically constructing $\mathcal{P}$ can further improve the average and worst-case run-time while maintaining optimality guarantees, given an \textit{admissible heuristic}.

\subsubsection{Decentralized Max-Min Heuristic Computation} \label{sec:heuristic}
%An admissible heuristic must be an underestimate the distance from any given node $p$ to the nearest goal node $p_{acc}$. Since the model $T$ is non-specific to any application type, the heuristic must also be problem-agnostic.

To calculate the max-min heuristic value for a given node $p$, the minimum cost to satisfying each individual remaining task $\phi_j \in \Phi \setminus \acc{p}$
must be computed. Using the single product $\bar{\mathcal{P}}^j = T \otimes A_j$, the minimum cost-to-goal for a given node $\bar{p}^j\in \bar{P}^j$, denoted $d^j(\bar{p}^j)$ can be computed in a decentralized manner for all $\bar{p}^j \in \bar{P}^j$. The minimum cost-to-go is computed using Dijkstra's algorithm expanding backwards from all accepting states $\bar{p}^j_{acc}$, where $\bar{acc}^j(\bar{p}^j_{acc}) = \phi_j$ to all $\bar{p}^j \in \bar{P}^j$ yielding the minimum cost to any accepting state $d^j(\bar{p}^j)$.

Given a node $p = (s, q_1, ..., q_N)$ and a set of remaining formula indices $I = \{1\leq j \leq N \mid \phi_j \in \Phi \setminus \acc{p}\}$, the max-min heuristic $h : P \mapsto \mathbb{R}_{\geq 0}$ can be expressed as
\begin{equation}
    h(p) = \max\limits_{j\in I}(d^j(\bar{p}^j)) \text{ where }\; \bar{p}^j = (s, q^j).
\end{equation}

\begin{lemma}[Admissible Heuristic]
The Max-Min heuristic $h(p)$ is admissible, i.e., $h$ is an under-estimate of the minimum cost-to-go.
\end{lemma}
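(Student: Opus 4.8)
The plan is to prove admissibility directly, by showing that $h(p)$ never exceeds the true optimal cost-to-go $d^*(p)$, which I define as the minimum of $\Cost{\ptraj}$ over all trajectories $\ptraj$ on $\paut$ that start at $p$ and reach some fully-accepting state $\pacc$ with $\acc{\pacc} = \Phi$. Since $h(p)$ is the maximum of the per-task quantities $d^j(\bar p^j)$ over the remaining tasks $j \in I$, it suffices to establish the per-task bound $d^j(\bar p^j) \le d^*(p)$ for every $j \in I$; taking the maximum over $I$ then gives $h(p) \le d^*(p)$, which is exactly the claim.

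First I would fix an arbitrary cost-optimal full-acceptance trajectory $\ptraj^* = p_0 p_1 \cdots p_M$ with $p_0 = p$, so that $\Cost{\ptraj^*} = d^*(p)$ and $\acc{p_M} = \Phi$; let $\pi^* = a_0 \cdots a_{M-1}$ be the inducing plan and let $s_k$ denote the WTS component of $p_k$, so $s_0 = s$. For a fixed $j \in I$, I would project $\ptraj^*$ onto the single product $\bar{\mathcal{P}}^j = T \otimes \dfa_j$ by keeping only the $T$-component and the $\dfa_j$-component of each product state. The key structural fact is that the product transition function is defined coordinate-wise (Def.~\ref{def:paut}): $p_{k+1} = \delta_P(p_k, a_k)$ forces $s_{k+1} = \delta_T(s_k, a_k)$ and $q_{j,k+1} = \delta_A(q_{j,k}, L(s_{k+1}))$. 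Hence the projected sequence $\bar\ptraj^j = \bar p^j_0 \cdots \bar p^j_M$, with $\bar p^j_0 = \bar p^j = (s, q_j)$, is a legal trajectory on $\bar{\mathcal{P}}^j$ generated by the same plan $\pi^*$.

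Next I would combine acceptance with the cost bound. Because $\acc{p_M} = \Phi \ni \phi_j$, the $\dfa_j$-component satisfies $q_{j,M} \in F_j$, so $\bar\ptraj^j$ reaches an accepting state of $\bar{\mathcal{P}}^j$; let $K_j \le M$ be the first index with $q_{j,K_j} \in F_j$ (well-defined since $q_{j,M} \in F_j$). Because the total cost depends only on WTS state-action pairs (Eq.~\eqref{eq:total cost}), the cost of the prefix $\bar p^j_0 \cdots \bar p^j_{K_j}$ equals $\sum_{k=0}^{K_j-1} c(s_k, a_k) \le \sum_{k=0}^{M-1} c(s_k, a_k) = d^*(p)$, where the inequality uses non-negativity of $c$. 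Since $d^j(\bar p^j)$ is by definition the minimum cost over all trajectories from $\bar p^j = (s, q_j)$ to any accepting state of $\dfa_j$, it can only be smaller, so $d^j(\bar p^j) \le \sum_{k=0}^{K_j-1} c(s_k, a_k) \le d^*(p)$. As $j \in I$ was arbitrary, $h(p) = \max_{j\in I} d^j(\bar p^j) \le d^*(p)$. When $I = \emptyset$, the state $p$ is already fully accepting, $d^*(p) = 0$, and we set $h(p) = 0$, so the bound holds trivially.

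The main obstacle is bookkeeping in the cost argument rather than any deep idea: one must be careful that $d^j$ measures a \emph{prefix} cost up to the first satisfaction of $\phi_j$, whereas $d^*(p)$ is the \emph{full} cost to joint satisfaction of all remaining tasks, and the inequality between them relies crucially on $c \ge 0$ so that discarding the suffix $k \ge K_j$ can only decrease the accumulated cost. The only other point requiring care is confirming that the coordinate-wise projection genuinely produces a valid run on $\bar{\mathcal{P}}^j$ — that is, that the single product is exactly the $j$-th marginal of $\paut$ in both its transition and acceptance structure — which follows immediately from the product construction in Def.~\ref{def:paut}.
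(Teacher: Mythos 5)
Your proof is correct and rests on the same key fact as the paper's: projecting a satisfying trajectory of $\paut$ coordinate-wise onto the single product $\bar{\paut}^j = T \otimes \dfa_j$ yields a valid accepting trajectory of no greater cost, so each $d^j(\bar{p}^j)$ (and hence their maximum $h(p)$) lower-bounds the true cost-to-go. The paper packages this as a short proof by contradiction while you argue directly and spell out the projection, prefix-cost, and non-negativity details, but it is essentially the same approach.
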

\begin{proof}
%Given a non-satisfying node $p$ that does not satisfy all tasks ($|acc(p)| < N$), let $\ptraj^* = \ptraj_0 ... \ptraj_m$ represent a cost-optimal trajectory where $\ptraj_0 = p$, $acc(\ptraj_m) = \Phi$, and $acc(\ptraj_{m-1}) \neq \Phi$ (the last remaining task(s) are satisfied in the final state $\ptraj_m$). Assume that $h(p) > cost(\ptraj^*)$. 
%By Def. \ref{def:paut}, $\ptraj^*$ must satisfy the remaining tasks $\Phi_{remaining} = \Phi \setminus acc(p)$. %Let $\ptraj_f$ mark the state in which the last remaining task is satisfied, i.e. $acc(\ptraj_f) = $
Let $\ptraj$ denote an arbitrary trajectory $\ptraj = \ptraj_0 , \ldots \ptraj_m$ with cost $\Cost{\ptraj}$ such that $\ptraj_0 = p$, $acc(\ptraj_m) = \Phi$ and $acc(\ptraj_{m-1}) \neq \Phi$. 
Assume there exists a $\ptraj^*$ such that $\Cost{\ptraj^*} < h(p)$.
Therefore, at least one remaining formula $\phi_j \in \Phi \setminus acc(p)$ has a minimum cost-to-go estimate $d^j(\bar{p}^j) > \Cost{\ptraj^*}$.
By the transition function in Def. \ref{def:paut}, the minimum cost between any two states in $\bar{\paut}^j$ must lower-bound the cost between equivalent states on $\paut$, hence a contradiction.
%($\ptraj$ satisfies all remaining tasks from state $p$). The cost%, let  denote the cost of cost from $p$ to the satisfaction of the last remaining task $\phi_{i,last} \in \Phi \setminus acc(\ptraj_{m-1})$ must be greater than or equal to $d^{j}(\bar{p}^{j})$ $\forall j\in I$ (the minimum cost from .
%Choose $\phi_{i,last} \in \Phi \setminus acc(\ptraj_{m-1})$

\end{proof}
 %\textit{}
%\begin{Lemma}
%test
%\end{Lemma}
% where $\bar{p}^j = (s, q^j)$.

%for node $p$, the minimum cost-to-go $d_i$ for a single task $\phi_i \in \Phi$ can be computed. Given a task $\phi_i \in \Phi$, a decentralized product $\mathcal{P}_i = T \otimes A_i$ is first constructed. The search queue is initialized to all accepting states $p_{acc}\in P_i$ where $acc(p_{acc}) = \{i\}$

%\begin{equation}
%    h_p = \max({d_1(p), ..., d_N(p)})
%\end{equation}

Our experiments and benchmarks show that using this heuristic significantly improves the computation time.

%\begin{figure}[h!]
%    \centering
%     \includegraphics[width=0.94\linewidth]{figures/benchmarks/combined.eps}
%%    \begin{subfigure}[t]{1\linewidth}
%%        \centering
%%        \includegraphics[width=0.9\linewidth]{figures/benchmarks/log_single_ndfas.eps}
%%        \caption{$A^*$ Constrained Synthesis Performance}
%%        % \kw{These images are still low-quality. Is there anyway in Matlab/matplotlib to increase the psi?}\jk{Agreed: Peter, save the figure with the following command. 300 is usually good enough for me but find a number that works for this figure. "exportgraphics(figure,'name.png','Resolution',300)" While your at it, consider changing the aspect ratio to make the plot shorter e.g. "pbaspect([1 1/2 1])". I also recommend making the text size larger.}
%%    \label{fig:single_ndfas}
%%    \end{subfigure}%
%%    \newline
%%    \begin{subfigure}[t]{1\linewidth}
%%        \centering
%%        \includegraphics[width=0.9\linewidth]{figures/benchmarks/log_pf.eps}
%%        \caption{$BOA^*$ Pareto Front Computation Performance}
%%    \label{fig:pf}
%%    \end{subfigure}%
%\caption{Heuristic comparison}
%\label{fig: bm}
%\vspace{-4mm}
%\end{figure}

\section{Experiments} 
\label{sec: results}

% UNCOMMENT

\begin{figure*}[t]
   \centering
   \begin{subfigure}[t]{0.24\textwidth}
       \centering
       \includegraphics[trim={0 10mm 0 10mm},clip,width=1.1\linewidth]{figures/case_studies/ps_1.pdf}
       \caption{$\text{cost}=50$, $\mu=77$, }
   \label{fig: ps_1}
   \end{subfigure}%
   %\hfill
   \begin{subfigure}[t]{0.24\textwidth}
       \centering
       \includegraphics[trim={0 10mm 0 10mm},clip,width=1.1\linewidth]{figures/case_studies/ps_2.pdf}
       \caption{$\text{cost}=60$, $\mu=30$}
   \label{fig: ps_2}
   \end{subfigure}%
   %\hfill
   \begin{subfigure}[t]{0.24\textwidth}
       \centering
       \includegraphics[trim={0 10mm 0 10mm},clip,width=1.1\linewidth]{figures/case_studies/ps_3.pdf}
       \caption{$\text{cost}=64$, $\mu=15$}
   \label{fig: ps_3}
   \end{subfigure}%
   ~~
   \begin{subfigure}[t]{0.24\textwidth}
       \centering
       \includegraphics[trim={0 10mm 0 10mm},clip,width=1.1\linewidth]{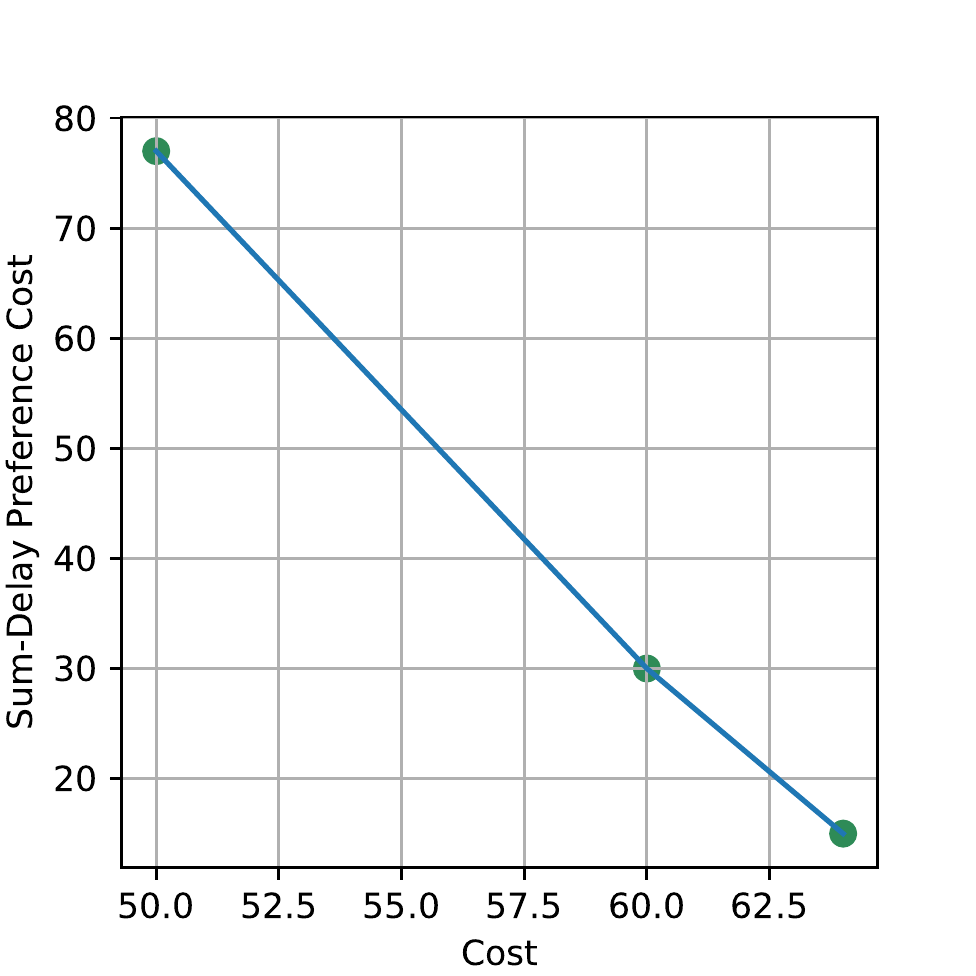}
       \caption{Pareto Front}
   \label{fig: ps_pf}
   \end{subfigure}%
   %\hfill
\caption{Case Study 2 - Varying levels of preference adherence are shown for a mobile robot in a partial satisfaction scenario where (a) is the most cost-efficient plan and (c) minimizes the weighted-sum preference function over violation costs.}
\label{fig: ps}
\vspace{-2mm}
\end{figure*}

In this section, we demonstrate our novel preference formulation on realistic problems: a mobile robot and a manipulator. Additionally, since we aim to show the utility of our planning framework for more realistic/complex problems, we include benchmarks of both single-plan synthesis as well as Pareto front computation with and without the problem-agnostic heuristic described in Sec. \ref{sec:heuristic}. 
%\kw{ First, we show how a single plan can be synthesized and evaluate the COMPUTATIONAL SPEED? on a benchmark. We compare the vanilla algorithm against problem-agnostic heuristics and present that the heuristic accelerates the speed by xxxxx. Similarly, we evaluate the computational speed of the pareto front computation algorithm on the same benchmark.} Both planning algorithms were implemented in C++ on a 3.9 GHz 8-core processor.

%\kw{You need to briefly explain what findings you are going to talk about. You should first raise problems that you are going to solve and explain each take away. Also, include other information like the PC and programming languages you used. etc.}

% UNCOMMENT

\begin{figure*}
    \centering
    \begin{subfigure}{0.43\textwidth}
        \centering
        \includegraphics[width=0.9\linewidth]{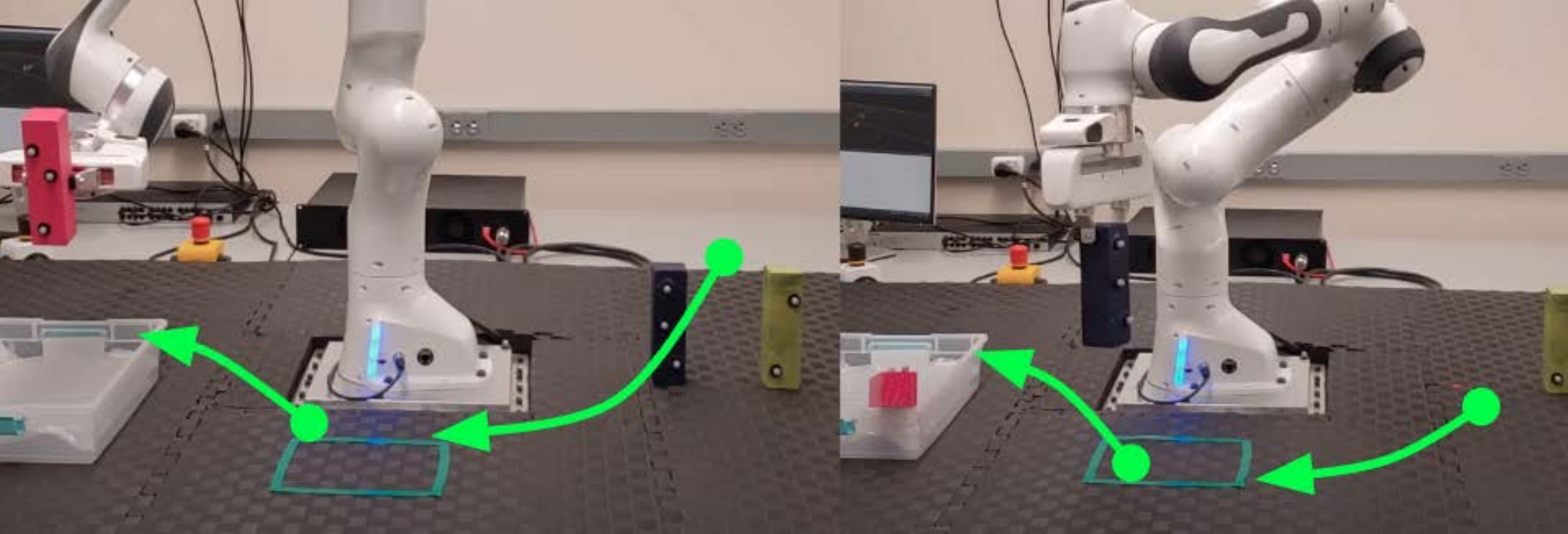}
        \caption{Cost-Optimal Plan}
    \label{fig: m_cost_op}
    \end{subfigure}%
    %\hfill
    \begin{subfigure}{0.5750\textwidth}
        \centering
        \includegraphics[width=1.0\linewidth]{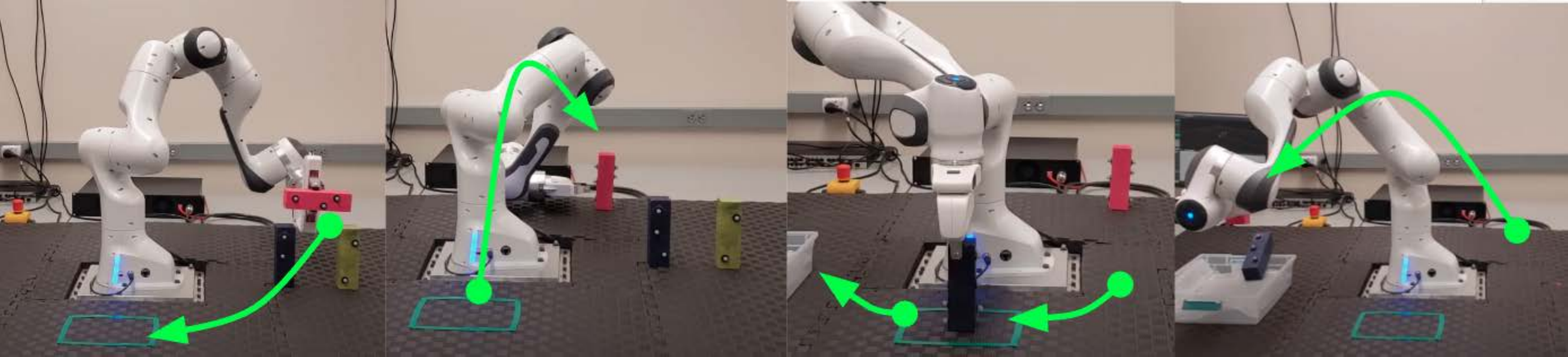}
        \caption{Preference-Optimal Plan}
    \label{fig: m_pref_opt}
    \end{subfigure}%
    %\hfill
%\caption{Case Study 2: Varying levels of partial preference adhesion are shown for a mobile robot where (a) is the most cost-efficient plan, and (f) satisfies the tasks in the desired order}
\caption{Case Study 3 - A cost-optimal manipulation plan is compared against the preference-optimal plan.}
\label{fig: manipulator}
\vspace{-4mm}
\end{figure*}

\subsubsection{Case Study 1: Order-of-Satisfaction Mobile-Robot}
Extending from Example~\ref{ex:wts}, we demonstrate the varying levels of preference adhesion using a 2D grid environment food-delivery robot simulation. For this case study, the environment is a $20\times 20$ grid-world equipped with cardinal-direction actions ($N$, $S$, $E$, $W$), each with a cost of $1$ unit. We give the robot five \csltl tasks: $\phi_1 = \neg pizza U cheese$ (\textit{do not pick up pizza until cheese factory is visited}), $\phi_2 = F grocer \land F(p2)$ (\textit{deliver groceries to person 2}), $\phi_3 = F (tacos \land F p3 \land F p4)$ (\textit{deliver tacos to person 3 and person 4}), $\phi_4 = F (pizza \land F p1)$ (\textit{deliver pizza to person 1}), and $\phi_5 = F(cheese \land F tacos \land F grocer \land F pizza)$ (\textit{deliver cheese to all food establishments}).
The layout of each location is shown in Fig. \ref{fig: ac_1}.
The preference function formulation in Example \ref{ex:pref_func} (preference over the order of satisfaction) is used. Fig.~\ref{fig: ac_1} shows the plan when only cost is optimized. Fig.~\ref{fig: ac_6} shows the resulting behavior when the user's preferred order is completely satisfied. Figs. \ref{fig: ac_2}-\ref{fig: ac_5} show varying levels of preference adherence. As the preference-cost increases, the robot finds ways to simultaneously work on tasks to save on cost.
Fig.~\ref{fig: ac_pf} shows the Pareto front.
%\kw{You must explain what can be observed in the figure.}
%\qh{Maybe it is good to remind the reader where this varying levels of preference-cost comes from. Why would we not just use high preference-cost then?}

%\begin{figure}
%    \centering
%    \includegraphics[width=0.5\linewidth]{figures/case_studies/env.pdf}
%    \caption{Food-Delivery Robot Environment}
%    \label{fig:env}
%\end{figure}
%\input{sections/6_figure.tex}

\subsubsection{Case Study 2: Partial Satisfaction Mobile-Robot}
The environment from Case Study 1 is used. We task the robots with the same five formulae from Case Study 1, however we substitute the last formula for $\phi_5 = \neg cheese U pizza \land \neg cheese U tacos$ (\textit{do not visit cheese factory until you visit pizza and tacos}). With this change, $\phi_1$ has a conflicts with $\phi_5$. 
We adapt our algorithm as explained in Remarks 1-6 to perform partial-satisfaction planning with preference.

We introduce substitution costs for $\phi_1$: $15$ units to go to the grocer instead of the cheese factory, $\phi_3$: $15$ units to forego delivering food to person 4, $\phi_4$: $12$ units to pick up tacos instead of pizza, and $\phi_5$ can entirely be skipped for $20$ units. 
For this case study, we employ a simple weighted-sum of all the costs where the weight for $\phi_5$ is double the others.
Fig. \ref{fig: ps} shows all optimal trade-off plans with \ref{fig: ps_1} being the most cost-optimal and \ref{fig: ps_3} being the most preference optimal. Note that all plans incur a non-zero preference cost since there is no feasible plan that can fully satisfy all tasks. However, as seen in \ref{fig: ps_1} and \ref{fig: ps_2}, our planner is able to compute optimal partially satisfying solutions such that the robot skips parts of certain tasks (such as delivering food to person 4 and visiting pizza) to save on cost. The Pareto front is shown in \ref{fig: ps_pf}. 
% With Remarks 1-6, and Case Study 2, we demonstrate how our planner can be applied to a partial-satisfaction scenario.

%\input{sections/6_figure_3.tex}

%\begin{figure*}[t]
%    \centering
%    \includegraphics[width=\textwidth]{figures/TODO_gwr_demo.jpg}
%    \caption{FIX THE PATH LENGTH \kw{Need better quality images.}\jk{remove titles (too small) and put them in the caption of subfigures instead}}
%    \label{fig:gwr_demo}
%\end{figure*}

% In this experiment, we show the computational efficacy of our problem-agnostic heuristic described in Sec. \ref{sec:heuristic}. 
% The average runtime 
% \ml{talk about the setup}
% and both algorithms are compared against the same algorithms with the heuristic identically set to $0$ (Dijkstra's algorithm). Each benchmark was performed on a $10\times 10$ grid-world using 5-state DFAs representing formulae of the form $F(s_a \land F(s_b) \land F(s_c))$ where $s_a$, $s_b$, and $s_c$ are observations of three random states. One hundred randomized trials were performed and the data was averaged to produce the benchmarks.

\subsubsection{Mobile-Robot Benchmarks}
We benchmarked the two planning algorithms with and without the heuristic on a $10\times 10$ grid-world with 5-state DFAs representing formulae of the form $F(s_a \land F(s_b) \land F(s_c))$ where $s_a$, $s_b$, and $s_c$ are observations of three random cells. One hundred randomized trials were performed.
% and the data was averaged to produce the benchmarks.  
Table \ref{tab:bm} shows the results.

% \begin{figure}[h!]
%     \centering
%     \includegraphics[width=\linewidth]{figures/benchmarks/log_single_ndfas.jpg}
%     \caption{$A^*$ Constrained Synthesis Performance}
%     \label{fig:single_ndfas}
% \end{figure}

% Note that the time axis is logarithmic.

% \caption{\small Benchmarking results for underwater inspection scenario. We report the mean time to solution and standard error of the mean. Unsuccessful runs are given the maximum time of $120s$.
%     % The scores within $10\%$ of the best scores for each specification are bold.\zs{Can we add uncertainty quantification for the \% successful?}
%     The best scores are shown in bold fond. 
%     }
%     \label{tab:benchmarking}
%     \scalebox{0.93}{
%     \begin{tabular}{l|c|c|c|c|c|c|c|c|c|c}
%     \toprule
%     \multirow{2}{*}{Algorithm} & \multicolumn{2}{c|}{$\phi_1$} & \multicolumn{2}{c|}{$\phi_2$} & \multicolumn{2}{c|}{$\phi_3$} & \multicolumn{2}{c|}{$\phi_4$} & \multicolumn{2}{c}{$\phi_5$}\\

\begin{table}[t]
    \centering
    \caption{\small Benchmark results for 10$\times$10 Grid-Robot over 100 runs.}
    % \begin{tabular}{ m{.5cm}|m{4em}|m{4em}|m{4em}|m{4em} }
    \begin{tabular}{ c|l|l|l|l } 
    \toprule
        \multirow{1}{*}{} & \multicolumn{2}{c|}{Single Plan Runtime (s)} & \multicolumn{2}{c}{Pareto Front Runtime (s)} \\ 
          N & \multicolumn{1}{c|}{w/o $h$} & \multicolumn{1}{c|}{w/ $h$} & \multicolumn{1}{c|}{w/o $h$} & \multicolumn{1}{c}{w/ $h$} \\ 
         \hline 
        %  2 & 0.0388 & 0.0075 & 0.0408 & 0.0090 \\ 
        % %  \hline 
        %  3 & 0.2477 & 0.0287 & 0.2686 & 0.0526 \\
        % %  \hline 
        %  4 & 1.491 & 0.1047 & 1.667 & 0.3095 \\ 
        % %  \hline 
        %  5 & 8.990 & 0.4754 & 10.16 & 1.984 \\ 
        % %  \hline 
        %  6 & 48.53 & 1.983 & 59.32 & 11.33 \\ 
        % %  \hline 
        %  7 & 250.1 & 7.500 & 334.4 & 62.55 \\ 
        % %  \hline 
        %  8 & 1.390\textsc{e}3 & 36.48 & 1.859 & 348.5 \\ 
        % %  \hline 
        2 & $3.88 \times 10^{-2}$ & $7.50\times 10^{-3}$ & $4.08\times 10^{-2}$ & $9.00\times 10^{-3}$ \\ 
        %  \hline 
         3 & $2.48\times 10^{-1}$ & $2.87\times 10^{-2}$ & $2.69 \times 10^{-1}$ & $5.26\times 10^{-2}$ \\
        %  \hline 
         4 & $1.49\times 10^{0}$ & $1.05\times 10^{-1}$ & $1.67 \times 10^{0}$ & $3.10 \times 10^{-1}$ \\ 
        %  \hline 
         5 & $8.99\times 10^{0}$ & $4.75\times 10^{-1}$ & $1.02\times 10^{1}$ & $1.98\times 10^{0}$ \\ 
        %  \hline 
         6 & $4.85\times 10^{1}$ & $1.98\times 10^{0}$ & $5.93\times 10^{1}$ & $1.13\times 10^{1}$ \\ 
        %  \hline 
         7 & $2.50\times 10^{2}$ & $\mathbf{7.50\times 10^{0}}$ & $3.34\times 10^{2}$ & $6.26\times 10^{1}$ \\ 
        %  \hline 
         8 & $1.39\times 10^{3}$ & $\mathbf{3.65\times 10^{1}}$ & $1.86\times 10^{3}$ & $3.48\times 10^{2}$ \\ 
        %  \hline 
        \bottomrule
     \end{tabular}
     \label{tab:bm}
\end{table}

% \begin{center}
% \begin{tabular}{ m{.5cm}|m{4em}|m{4em}|m{4em}|m{4em} } 
%  N & Single Plan No Heuristic & Single Plan w/ Heuristic & Pareto Front No Heuristic & Pareto Front w/ Heuristic \\ \hline 
%  2 & 0.0388 & 0.0075 & 0.0408 & 0.0090 \\ \hline 
%  3 & 0.2477 & 0.0287 & 0.2686 & 0.0526 \\ \hline 
%  4 & 1.491 & 0.1047 & 1.667 & 0.3095 \\ \hline 
%  5 & 8.990 & 0.4754 & 10.16 & 1.984 \\ \hline 
%  6 & 48.53 & 1.983 & 59.32 & 11.33 \\ \hline 
%  7 & 250.1 & 7.500 & 334.4 & 62.55 \\ \hline 
%  8 & 1.390\textsc{e}3 & 36.48 & 1.859 & 348.5 \\ \hline 
%  \label{tab:bm}
% \end{tabular}
% \end{center}

% The benchmarks for computing a single plan as well as computing the entire Pareto front are shown in Fig. \ref{tab:bm}. Single plan synthesis was performed using a preference cost bound of $\mu_{max} = \infty$. 
We observe three important points: (i) the runtime increases exponentially with the number of formulae, (ii) the proposed heuristic reduces computation time by at least one order of magnitude on both algorithms and in some cases by two orders of magnitude (shown in bold font) , and (iii) as $N$ increases the computational efficacy of the heuristic becomes more dominant. 

\subsubsection{Manipulation Case Study}
% The problem formulation and methodology proposed in this paper are completely system-agnostic. 
% We introduce another case study to demonstrate varying levels of preference adhesion for a manipulation example. 
To show the generality of the approach, we performed experiments on the Franka Emika ``Panda'' 7-DOF robotic manipulator. 
% A 7-DOF robotic manipulator is used where each robotic action interfaces with a sampling based motion planner to execute the motion primitive as outlined in \cite{He:ICRA:2015, Muvvala:2022:ICRA}. The model captures motion primitives $A=\{Grasp, Release, Transit, Transport\}$ with respective costs of $1$, $1$, $3$, and $5$ units, includes two manipulation objects, a pink block $B^P$ and a green block $B^G$, and five locations. The robot is tasked with first placing each block in a special ``sanitization'' aria (marked by a green rectangle), then placing it inside the bin. Formally, $\phi_1 = F(B^P_{sani} \land FB^P_{bin})$ and $\phi_2 = F(B^G_{sani} \land FB^G_{bin})$ in that order and $B^G$ starts within the sanitization area. The same preference function $\mu$ is used from Example \ref{ex:pref_func}.
We used our algorithm as the high-level planner in the framework introduced in \cite{He:ICRA:2015, He:IROS:2017, Muvvala:2022:ICRA}. The model captures 4 types of actions $A=\{Grasp, Release, Transit, Transport\}$ with respective costs of $1$, $1$, $3$, and $5$ units. 

%The first scenario 
%shown in Fig. \ref{fig: manipulator1}
%includes two manipulation objects, a pink block $b^p$ and a green block $b^g$. The robot is tasked with first placing each block in a special ``sanitization'' area (green rectangle), then placing it inside the bin. Formally, $\phi_1 = F(b^p_{sani} \land F b^p_{bin})$ and $\phi_2 = F(b^g_{sani} \land F b^g_{bin})$ in that order. Both blocks start in an arch configuration, where the block on top must be moved before the base. The same $\mu$ is used from Example \ref{ex:pref_func}.

%A link to the videos is provided here: \cite{videos}. 

%As can be seen in the first video, the robot must first move the green block out of the sanitization area in order to sanitize and bin the pink block first, requiring more energy. In the second video, the robot simply bins the green block first for efficiency, then sanitizes and bins the pink block.
%\ml{say something about the relation of the behaviors to the level of preference and cost.  Give abstraction size and computation time!}

In the scenario shown in Fig. \ref{fig: manipulator}, there is a blue block $b^b$ and a pink block $b^p$ that need to be sanitized and binned. The model for this system is complex with three objects 
 and seven locations, resulting in a transition system with over $6$K states. The blocks are initially arranged in an arch with $b^p$ on top, $b^b$ in the left base, and a green block in the right base. We assume $b^g$ is stationary for this demonstration. The robot is tasked with sanitizing and binning $b^b$, then sanitizing and binning $b^p$. However, the tasks capture that a block cannot be on top unless there are two base blocks for support, i.e., $\phi_1 = (\neg b^b_{top}) \land (b^p_{top} \implies b^b_{left}) U (b^b_{san} \land b^b_{bin1})$. Formula $\phi_2$
is similar to $\phi_1$ except $b^b$ is replaced with $b^p$ on the right-hand-side of $U$. As shown in figure \ref{fig: m_cost_op}, the cost optimal plan computed using our framework sanitizes and bins the pink block, then does the same for the blue block. In figure \ref{fig: m_pref_opt}, the preference optimal plan must move the pink block off of the top to sanitize and bin the blue block first before binning the pink block.
% $\phi_2 = (\neg b^b_{top} \lor b^g_{top}) \land (b^b_{top} \implies b^b_{left} \land b^g_{right}) U (b^p_{san} \land b^p_{bin2})$.
The videos of the robot executions for each plan are provided in \cite{videos}.

% The second scenario 
% shown in Fig. \ref{fig: manipulator2}
% includes an additional object, blue block $b^b$.  The robot is tasked with first placing each block in a special ``sanitization'' area (green rectangle), then placing it inside the bin. Formally, $\phi_1 = F(b^p_{sani} \land F b^p_{bin})$ and $\phi_2 = F(b^g_{sani} \land F b^g_{bin})$ in that order and $b^g$ starts within the sanitization area. The same preference function $\mu$ is used from Example \ref{ex:pref_func}.

\section{Conclusion}
\label{sec: conclusion}

In this work, 
we consider optimal preference-cost trade-off planning for robotic systems with multiple tasks. 
We introduce a novel formulation for the user's preference that reasons over the cost of satisfying each individual task. 
% \km{In this work, we formulate  a novel formulation \ldots.}
% We show that the robot can save on efficiency while decreasing the amount that it adheres to the user's preferred behavior. 
% \km{awkward sentence}. 
Further, we present algorithmic adaptations of the $A^*$ search algorithm
with a novel heuristic
to efficiently synthesize plans with a desired user preference, as well as Pareto front analysis between \textit{cost} and \textit{preference}.  Our method is general with applications in both mobile robotics and manipulation.  Our benchmarks show our heuristic significantly reduces 
computation time.  
% We perform experiments on as well as a simulated mobile-robot to examine the platform-agnostic behavior of our planner for different levels of preference adhesion. We show that a problem-agnostic heuristic can be used to greatly speed up the run-time of our planning framework. %Since $A^*$ can use a heuristic to speed up the search time, we define a problem-agnostic heuristic and perform benchmarks that show the efficacy of our heuristic.
% \ml{In the future, }

% \newpage

\bibliographystyle{IEEEtran}
\bibliography{games_references,lahijanian,amorese}

\end{document}